\documentclass[10pt]{article}

\usepackage[margin=1in]{geometry}
\usepackage{natbib}
\usepackage{parskip}

\usepackage{algorithm, algorithmicx, algpseudocode}
\usepackage{amssymb}
\usepackage{amsmath}
\usepackage{graphicx}
\usepackage{mathtools}
\usepackage{needspace} 
\usepackage{multicol}
\usepackage{listings}
\usepackage{amsthm}
\usepackage[shortlabels]{enumitem}
\usepackage[italicdiff]{physics}
\usepackage{cancel}
\usepackage[dvipsnames]{xcolor}
\usepackage{verbatim}
\usepackage{comment}
\usepackage{array}
\usepackage{tikz-cd}
\usepackage{import}
\usepackage{booktabs}
\usepackage{comment}
\usepackage{array}
\usepackage[colorlinks, linkcolor=red, citecolor=blue, linktocpage=true]{hyperref}
\usepackage{forest}
\usepackage{float}
\usepackage{xspace}
\usepackage{aliascnt}
\usepackage{wrapfig}
\usepackage{thm-restate}
\usepackage{adjustbox}
\usepackage{bm}
\usepackage{bbm}
\usepackage{complexity}
\usepackage{makecell}
\usepackage{nicefrac}
\usetikzlibrary{calc, positioning, external, arrows.meta}

\newcommand{\uvec}{\boldsymbol{u}}

\usepackage{tcolorbox}
\usepackage{enumitem}
\definecolor{lightblue}{rgb}{0.8, 0.9, 1}
\definecolor{lightred}{rgb}{1, 0.8, 0.8}
\definecolor{lightgreen}{rgb}{0.8, 1, 0.8}

\tcbset{
	myblockstyle/.style={
		boxrule=0.4pt,
		colback=#1!10,
		colframe=#1!80!black,
		boxsep=2pt,
		left=-2.5pt,
		right=2pt,
		top=0pt,
		bottom=0pt,
		before skip=2pt,
		after skip=2pt,
	}
}

\newtcbox{\tcbinlinebox}[1][]{nobeforeafter,
	on line,
	colback=#1!10,
	colframe=#1!80!black,
	boxrule=0.8pt,
	arc=2pt,
	boxsep=0pt,
	left=2pt, right=2pt, top=2pt, bottom=2pt,
	valign=center
}

\usepackage{multirow}
\usepackage{colortbl}
\usepackage{makecell}

\usepackage{pifont}

\newcommand{\xmark}{\ding{55}}

\definecolor{rqBlueBack}{HTML}{EAF2FF} 
\definecolor{rqBlueFrame}{HTML}{3B82F6} 

\allowdisplaybreaks

\DeclareMathOperator*{\argmin}{argmin}

\let\cite\citep

\newfloat{protocol}{tbp}{lop}
\floatname{protocol}{Protocol}

\newtheorem{theorem}{Theorem}
\numberwithin{theorem}{section}
\newtheorem*{theorem*}{Theorem}
\newtheorem{conjecture}[theorem]{Conjecture}

\newtheorem{lemma}[theorem]{Lemma}

\theoremstyle{definition}
\newtheorem{definition}[theorem]{Definition}
\theoremstyle{condition}

\theoremstyle{assumption}
\newtheorem{assumption}[theorem]{Assumption}

\newcommand{\xvec}{\boldsymbol{x}}
\newcommand{\lvec}{\boldsymbol{\ell}}
\newcommand{\gvec}{\boldsymbol{g}}
\newcommand{\xivec}{\boldsymbol{\xi}}
\newcommand{\zetavec}{\boldsymbol{\zeta}}
\newcommand{\sigmavec}{\boldsymbol{\sigma}}

\newcommand{\nb}[3]{{\colorbox{#2}{\bfseries\sffamily\scriptsize\textcolor{white}{#1}}}{\textcolor{#2}{\sf\small\textit{#3}}}}
\newcommand{\stradi}[1]{\nb{Stradi}{red}{#1}}

\usepackage{caption}
\makeatletter
\def\thanks#1{\protected@xdef\@thanks{\@thanks
        \protect\footnotetext{#1}}}
\makeatother

\title{Truly Adapting to Adversarial Constraints in Constrained MABs}
\author{
    Francesco Emanuele Stradi \\
    \texttt{francescoemanuele.stradi@polimi.it} \\
    Politecnico di Milano 
\and
    Kalana Kalupahana \\
    \texttt{kalanakalpitha.kalupahana@mail.polimi.it} \\
    Politecnico di Milano 
\and
    Matteo Castiglioni \\
    \texttt{matteo.castiglioni@polimi.it} \\
    Politecnico di Milano 
\and
    Alberto Marchesi \\
    \texttt{alberto.marchesi@polimi.it} \\
    Politecnico di Milano 
\and
    Nicola Gatti\\
    \texttt{nicola.gatti@polimi.it} \\
    Politecnico di Milano 
}
\date{\today}

\begin{document}

\maketitle

\begin{abstract}%
  We study the constrained variant of the \emph{multi-armed bandit} (MAB) problem, in which the learner aims not only at minimizing the total loss incurred during the learning dynamic, but also at controlling the violation of multiple \emph{unknown} constraints, under both \emph{full} and \emph{bandit feedback}. We consider a non-stationary environment that subsumes both stochastic and adversarial models and where, at each round, both losses and constraints are drawn from distributions that may change arbitrarily over time. In such a setting, it is provably not possible to guarantee both sublinear regret and sublinear violation. Accordingly, prior work has mainly focused either on settings with stochastic constraints or on relaxing the benchmark with fully adversarial constraints (\emph{e.g.}, via competitive ratios with respect to the optimum). We provide the first algorithms that achieve optimal rates of regret and \emph{positive} constraint violation when the constraints are stochastic while the losses may vary arbitrarily, and that simultaneously yield guarantees that degrade smoothly with the degree of adversariality of the constraints. Specifically, under \emph{full feedback} we propose an algorithm attaining $\widetilde{\mathcal{O}}(\sqrt{T}+C)$ regret and $\widetilde{\mathcal{O}}(\sqrt{T}+C)$ {positive} violation, where $C$ quantifies the amount of non-stationarity in the constraints. We then show how to extend these guarantees when only bandit feedback is available for the losses. Finally, when \emph{bandit feedback} is available for the constraints, we design an algorithm achieving $\widetilde{\mathcal{O}}(\sqrt{T}+C)$ {positive} violation and $\widetilde{\mathcal{O}}(\sqrt{T}+C\sqrt{T})$ regret.
\end{abstract}

\section{Introduction}

Over the past few years, \emph{constrained multi-armed bandit} (MAB) problems have attracted growing attention in learning theory (see, \emph{e.g.},~\citep{liakopoulos2019cautious,pacchiano2021stochastic,Unifying_Framework}).
In the unconstrained MAB setting, the learner is evaluated solely in terms of \emph{regret}, which captures the gap between the learner's performance and that of an \emph{optimal-in-hindsight} (fixed) action.
Constrained variants introduce additional challenges: while learning, the learner must not only minimize regret, but also ensure that the prescribed constraints are not violated excessively.

In these settings, a well-known impossibility result by~\citet{Mannor} shows that it is provably impossible to achieve both regret and constraint violation that are sublinear in the number of rounds $T$ when the constraints are chosen adversarially, \emph{i.e.}, when they can change arbitrarily across rounds.\footnote{In this paper, we say that a quantity is sublinear in $T$ if it is $o(T)$.} For this reason, the literature has largely focused on different regimes. Indeed, many works study settings in which both losses and constraints are sampled i.i.d.\ from a fixed distribution at each round~\citep{Exploration_Exploitation, polytopes}, or where constraints are stochastic while losses may be adversarial~\citep{Upper_Confidence_Primal_Dual,stradi2024learning}. In both cases, optimal $\widetilde{\mathcal{O}}(\sqrt{T})$ regret and constraint violation are attainable. More recent contributions aim to provide best-of-both-worlds guarantees, allowing constraints to be either stochastic or adversarial~\citep{Unifying_Framework, stradi24a, bernasconi2024beyond}. These works typically guarantee $\widetilde{\mathcal{O}}(\sqrt{T})$ regret and violation in the stochastic regime, while in the adversarial regime they ensure sublinear violation together with sublinear $\alpha$-regret, \emph{i.e.}, regret measured against a fraction of the optimal reward. 
However, these algorithms come with several drawbacks. For instance, when constraints are only mildly adversarial, they may fail to provide any sublinear regret guarantee. Furthermore, in the adversarial regime, they typically control only a notion of violation that allows for cancellation, meaning that strictly feasible decisions can compensate for unfeasible ones. 

A recent attempt to overcome these challenges is~\citep{stradi2025taming}, where the authors study constrained MABs in a regime more general than the fully adversarial one, in which both losses and constraints are drawn from distributions that may vary across rounds.\footnote{Indeed, \citet{stradi2025taming} study \emph{constrained Markov decision processes} (CMDPs). Nonetheless, the way unknown constraints are handled is essentially equivalent in constrained MABs and CMDPs.}
%
%
Crucially, however, they assume that the non-stationarity of both losses and constraint distributions is bounded. Under this assumption, they propose a meta-algorithm achieving $\widetilde{\mathcal{O}}(\sqrt{T}+C)$ regret and \emph{positive} violation---thereby \emph{not} allowing cancellation of violation across rounds---where $C$ measures the maximum amount of non-stationarity in both losses and constraints.
Their approach has two main drawbacks. First, the resulting regret and violation bounds become vacuous (\emph{i.e.}, linear in $T$) in the adversarial-loss regime; in particular, they do not recover the optimal $\widetilde{\mathcal{O}}(\sqrt{T})$ guarantees in the mixed setting where losses are adversarial and constraints are stochastic. Second, the meta-algorithm relies on a rather complex corralling technique, \emph{i.e.}, a no-regret master procedure that selects among multiple subroutines instantiated with different values of $C$. This design makes the algorithm complex, and the analysis technically involved and somewhat intricate. Moreover, it yields a quadratic dependence on the number of constraints. Due to space constraints, we refer to Appendix~\ref{app:related} for a complete discussion on related works.

In this work, we aim to advance the theoretical understanding of constrained MABs. In particular, we address the following research question:
\begin{tcolorbox}[
colback=blue!5, 
colframe=blue!60, 
boxrule=0.8pt, 
arc=2mm, 
left=6pt,right=6pt,top=6pt,bottom=6pt
]
\begin{center}
			\emph{Is it possible to achieve sublinear regret and sublinear positive constraint violation that degrade only with the degree of non-stationarity in the constraints?}
\end{center}
\end{tcolorbox}
In this paper, we provide an affirmative answer to the question above, as we describe next.

\subsection{Original Contribution}

We study the \emph{constrained} MAB problem in which both the losses and the unknown constraints are drawn at each round from distributions that may change arbitrarily over time. We quantify the non-stationarity of the constraints via a corruption level $C$, which measures the distance between the mean values of the constraint distributions and a fictitious uncorrupted constraint vector. Crucially, our notion of corruption level accounts only for the non-stationarity of the constraints, and \emph{not} for that of the losses. Clearly, when the constraints are stochastic but stationary, $C = 0$, while in the worst case $C = \Theta(T)$ (see Section~\ref{sec:Preliminaries} for further details on the setting). Throughout the paper, we consider different feedback models. Specifically, in the full feedback case, at the end of each round the learner observes the losses and the constraint violation of every possible action. In contrast, under bandit feedback, the learner observes only the loss and the violation of the chosen action.

As a warm-up, in Section~\ref{sec:ineq} we provide a detailed discussion on the technical challenges posed by our setting. Then, we show how simple concentration results for the corrupted constraints yield $\widetilde{\mathcal{O}}(\sqrt{T}+C)$ regret $R_T$ and \emph{positive} constraint violation $V_T$ when the corruption level $C$ is known.

\paragraph{Full feedback} 
In Section~\ref{sec:full}, we focus on the full feedback setting. In this case, we propose an algorithm that achieves $\widetilde{\mathcal{O}}(\sqrt{T}+C)$ regret and \emph{positive} violation without any prior knowledge of the corruption. Our approach relies on two main components.
First, at each round, the algorithm constructs an approximate feasible decision set $\mathcal{X}_t$, by using the \emph{optimism} principle. Since $C$ is unknown, this optimistic approximation does \emph{not} ensure that an optimal strategy is included in the set at every round. To tackle this challenge, we leverage a second component: a no-regret optimization procedure that guarantees small \emph{switching regret on moving decision spaces}. Specifically, we employ online mirror descent with a fixed-share update~\citep{cesa2012mirror}, and we show that, when the number of switches of a dynamic benchmark---allowed to lie in moving decision spaces---is small, the procedure attains sublinear dynamic regret. This property enables us to effectively analyze the performance of our algorithm against a fixed benchmark that may not always belong to the per-round decision space, leading to our final result.
Finally, in Section~\ref{sec:fullconst}, we show how to extend this result to the case where only bandit feedback is available on the losses.

\paragraph{Bandit feedback} In Section~\ref{sec:bandit}, we study the bandit feedback setting. We propose an algorithm that attains $\widetilde{\mathcal{O}}(T^{\max\{\nicefrac{1}{2},\beta\}}+C)$ \emph{positive} violation and $\widetilde{\mathcal{O}}(T^\beta+CT^{1-\beta})$ regret, where $\beta$ is given as input. Specifically, by setting $\beta=\nicefrac{1}{2}$ we obtain $\widetilde{\mathcal{O}}(\sqrt{T}+C)$ \emph{positive} violation and $\widetilde{\mathcal{O}}(\sqrt{T}+C\sqrt{T})$ regret.
As we discuss in Section~\ref{sec:bandit}, under bandit feedback, we cannot rely on switching-regret guarantees to address the main technical challenges. Intuitively, in adversarial-loss settings, one cannot ensure convergence to an optimal strategy; consequently, the support of an optimal strategy is \emph{not} guaranteed to be explored sufficiently. We overcome this issue via a two-phase algorithm. In the first phase, the learner enforces uniform exploration for $\Theta(T^\beta)$ rounds. In the second phase, it runs online mirror descent~\citep{Orabona} over $\mathcal{X}_t$. The final guarantee follows from the fact that the forced exploration allows us to show that, with high probability, an approximately optimal strategy belongs to $\mathcal{X}_t$ throughout the entire second phase.

\paragraph{Comparison with the state-of-the-art} To conclude the section, we provide a brief comparison between the regret and violation bounds derived in this work and the state-of-the-art:
\begin{itemize}[noitemsep, topsep=0pt, leftmargin=*]
\item Our results match those in the \emph{stochastic constraints} literature, including settings with stochastic losses~\citep{Exploration_Exploitation}, adversarial losses with full feedback~\citep{Upper_Confidence_Primal_Dual}, and adversarial losses with bandit feedback~\citep{stradi2024learning}.
%
%
Specifically, we match their $\widetilde{O}(\sqrt{T})$ regret and violation bounds, while additionally providing guarantees when the constraints are adversarial.
\item Comparing to the \emph{best-of-both-worlds} results of~\citet{bernasconi2024beyond}, we match their guarantees when the constraints are stochastic. When the constraints are (even only mildly) adversarial, they do \emph{not} provide any sublinear regret guarantee with respect to an optimal strategy; specifically, they obtain $\widetilde{\mathcal{O}}(\sqrt{T})$ regret against a \emph{fraction} of the optimum. The \emph{same} guarantee can be easily recovered by our algorithm, as discussed in Section~\ref{sec:ineq}. In contrast, we show that our algorithm achieves sublinear regret whenever the constraints are mildly adversarial. Moreover, under \emph{adversarial} constraints,~\citet{bernasconi2024beyond} do \emph{not} provide guarantees on the \emph{positive} violation, thus allowing for cancellations. Finally, their techniques do \emph{not} extend to adversarial settings with noise, namely when both losses and constraints are drawn from distributions that vary over time.
\item Comparing to~\citet{stradi2025taming}, who provide regret and \emph{positive} violation bounds that degrade with the amount of corruption affecting \emph{both} losses and constraints, we obtain guarantees that degrade only with the degree of non-stationarity of the constraints, and are thus optimal in the setting with adversarial losses and stochastic constraints.
\end{itemize}

\section{Preliminaries}
\label{sec:Preliminaries}

We study \emph{online learning} problems~\citep{cesa2006prediction} in which a learner interacts with an unknown environment over $T$ rounds, so as to minimize long-term losses subject to $m\in\mathbb{N}_+$ \emph{unknown} constraints.
At each round $t\in[T]$,\footnote{In this paper, we denote by $[a \ldots b]$ the set of all the natural numbers from $a \in \mathbb{N}$ to $b \in \mathbb{N}$ (both included), while $[b] \coloneqq [ 1 \ldots b ]$ denotes the set of the first $b \in \mathbb{N}$ natural numbers.} the learner selects a strategy $\xvec_t\in\Delta_K$ over $K\in\mathbb{N}_+$ arms, where $\Delta_K$ is the $(K-1)$-dimensional simplex. Then, they select arm $a_t\sim \xvec_t$, suffering a loss $\ell_t(a_t)\in[0,1]$ and a constraint violation $g_{t,i}(a_t)\in[-1,1]$ for each $i\in[m]$, where non-strictly-positive values stand for satisfaction of the constraint. The loss vector $\lvec_t\in[0,1]^K$ is sampled at each round from a distribution $\mathcal{L}_t$ while, for all $i\in[m]$, the constraint vector $\gvec_{t,i}\in[-1,1]^K$ is sampled at each round from a distribution $\mathcal{G}_{t,i}$. $\mathcal{L}_t$ and $\mathcal{G}_{t,i}$ are allowed to change arbitrarily across rounds, \emph{i.e.}, they may be chosen adversarially. We refer to $\bar{\lvec}_t \in [0,1]^K$ as the expected value of $\mathcal{L}_t$, and to $\bar{\gvec}_{t,i} \in [-1,1]^K$ as the expected value of $\mathcal{G}_{t,i}$.
When \textcolor{red}{\emph{full feedback}} (on the losses and/or constraints) is available, the learner observes the full (loss and/or constraint) vectors at the end of each round. When only \textcolor{blue}{\emph{bandit feedback}} is available, the learner observes the loss and the constraint violation only of the selected arm. In Protocol~\ref{alg: Learner-Environment Interaction}, we provide the learner-environment interaction.

	\begin{protocol}[!htp]
		\caption{Learner-Environment Interaction}
		\label{alg: Learner-Environment Interaction}
        \begin{algorithmic}[1]
		\For{$t\in[T]$}
			
			\State The adversary selects $\mathcal{L}_t$, $\mathcal{G}_{t,i} $ for all $ i\in[m]$
            
             \State $\lvec_t\sim\mathcal{L}_t$ and $\gvec_{t,i}\sim\mathcal{G}_{t,i}$ for all $ i\in[m]$ are drawn
			
			\State Select strategy $\xvec_t$ and choose $a_t\sim \xvec_t$
			
			
			\State Suffer loss $\ell_t(a_t)$ and violation $g_{t,i}(a_t) $ for all $i\in[m]$
            
            \State \tcbinlinebox[red]{Observe $\lvec_t$ and $\gvec_{t,i} $ for all $ i\in[m]$} 
			
            \State \tcbinlinebox[blue]{Observe $\ell_t(a_t)$ and $g_{t,i}(a_t) $ for all $ i\in[m]$ }
			
		\EndFor
        \end{algorithmic}
	\end{protocol}

Given the impossibility of learning under adversarial constraints~\citep{Mannor}---and noting that our setting is a generalization of standard adversarial ones, where no noise is added to adversarial losses and constraints---we aim at results parametrized by the amount of adversariality of the constraints. Formally, we introduce the notion of \emph{(adversarial) corruption} of the constraints defined as $C \coloneqq \max_{i\in[m]} C_i$ where $
C_i \coloneqq \min_{\gvec_i \in[-1,1]^{K}}\sum_{t=1}^T\lVert \bar\gvec_{t,i} - \gvec_i \rVert_1$. For every $i \in [m]$, we let $    \gvec^\circ_i \in [-1,1]^K$ be the constraint vector that attains the minimum in the definition of $C_i$.

\subsection{Performance Metrics}

To define the performance metrics used to evaluate our learning algorithms, we need to introduce an \emph{offline} optimization problem.
Program~\eqref{lp:offline_opt} defines an ({offline}) \emph{optimal feasible solution in hindsight}:
\begin{equation}\label{lp:offline_opt}
	\text{OPT} \coloneqq \begin{cases}
		\min_{ \xvec\in \Delta_K} & \sum_{t=1}^T \bar{ \lvec}_{t}^{\top} \xvec \quad \text{s.t.}\\
		& \sum_{t=1}^T\bar \gvec_{t,i}^{\top}  \xvec \leq 0 \quad \forall i\in[m].
	\end{cases}
\end{equation}
Notice that, differently from standard MAB problems~\citep{Orabona}, in their constrained counterpart an optimal solution in hindsight may need to randomize over arms, since the constraints can potentially cut out vertices of the simplex, thus making choosing any arm potentially suboptimal. 

Throughout the paper, we make use of the following Slater's like assumption, which is standard in adversarial online learning with unknown constraints (see, \emph{e.g.},~\citep{Immorlica2022, Unifying_Framework,stradi24a,bernasconi2025noregret}). 
\begin{assumption}\label{cond:slater}
	There exists a strategy $\xvec^\circ\in\Delta_K$ such that $\max_{t\in[T]}\bar\gvec_{t,i}^\top \xvec^\circ < 0$ for all $i\in[m]$.
\end{assumption}
We also introduce a problem-specific \emph{feasibility parameter} $\rho\in[0,1]$ related to Assumption~\ref{cond:slater}, defined as $
\rho \coloneqq \max_{\xvec \in \Delta_K} \min_{t\in[T]}\min_{i \in [m]} -\bar \gvec_{t,i}^\top \xvec
.$
We denote by $\xvec^\diamond \in \Delta_K$ a strategy that attains the value $\rho$, which we refer to in the following as a \emph{strictly feasible strategy}.
%
Intuitively, $\rho$ represents by how much $\xvec^\diamond$ strictly satisfies the constraints.
Assumption~\ref{cond:slater} is equivalent to assuming that $\rho > 0$.

Now, we introduce the notion of \emph{(cumulative) regret} and \emph{(cumulative) positive constraint violation}, the performance metrics used to evaluate algorithms.
The regret over the $T$ rounds is defined as:
\[
R_{T} \coloneqq   \sum_{t=1}^T  \ell_t(a_t) - \text{OPT}.
\]
In the following, we denote by $\xvec^*$ a strategy solving Program~\eqref{lp:offline_opt}.
Thus, $\text{OPT}=\sum_{t=1}^T\bar \lvec_t^{\top} \xvec^*$ and the regret can be written as $R_{T} \coloneqq   \sum_{t=1}^T  \ell_t(a_t) - \sum_{t=1}^T  \bar \lvec_t^{\top}   \xvec^*$.

The cumulative positive constraint violation over $T$ rounds is:
\[
V_{T}:= \max_{i\in[m]}\sum_{t\in[T]}\big[ \bar \gvec_{t,i}^\top \xvec_t \big]^+,\] where we let $[\cdot]^+:=\max\{0, \cdot\}$.
To be consistent with the definition of $\text{OPT}$, we use the randomized strategy played by the learner instead of the actual arm (and similarly, we take the expectation over the constraint distribution) in the constraint violation definition.
Notice that replacing the strategy with the realized arm (and the expectation with the sampled constraint) in the violation definition would lead to linear violation even if the optimal solution $\xvec^*$ is played for $T$ rounds, due to the $[\cdot]^+$ operator. We remark that a sublinear bound on $V_T$ directly implies a bound on $\max_{i\in[m]}\sum_{t\in[T]} g_{t,i}(a_t)$ (where $[\cdot]^+$ is not used) up to a $\widetilde{\mathcal{O}}(\sqrt T)$ factor, thanks to a straightforward application of the Azuma–Hoeffding inequality.

The goal of the learner is to attain sublinear regret and sublinear positive constraint violation with bounds that degrade gracefully with respect to the constraint corruption term $C$.

\section{Warm-Up: The Trade-Off Between Regret and Violation}
\label{sec:ineq}

We start highlighting the technical challenges of our setting, namely, the trade-off that any algorithm has to face when dealing with an adversarial online learning problem under unknown constraints. 

It is well known that adversarial regret minimizers are capable of being no-regret with respect to any strategy that is included in their decision space $\mathcal{X}_t$ at every round $t\in[T]$ (see, \emph{e.g.}, \citep{JinLearningAdversarial2019, stradi2024learning, bernasconi2024beyond}).  Thus, the fundamental challenge of our setting is to simultaneously ensure that: (i) an optimal feasible solution is included in the decision space at every round, so that no-regret guarantees can be easily attained; and (ii) the decision spaces are \emph{not} ``too large'', which would lead to large constraint violation.

When the losses are adversarial, and each constraint $i\in [m]$ is sampled from a fixed distribution $\mathcal{G}_i$ at each round, the most natural idea is to build an optimistic feasible set by employing Hoeffding inequality. Specifically, referring to $\widehat{\gvec}_{t,i}$ as the empirical mean of the observed constraint violation, it can be easily shown that the expected value of $\mathcal{G}_i$ lies in $\widehat{\gvec}_{t,i}\pm \xivec_t$ with high probability, where $\xivec_t$ is of order $\mathcal{O}(\nicefrac{1}{\sqrt{t}})$ under full feedback, while it is of order $\mathcal{O}(\nicefrac{1}{\sqrt{N_t(a)}})$, where $N_t(a)$ is the number of pulls of arm $a$, when only bandit feedback is available. Thus, by taking at each round $(\widehat{\gvec}_{t,i} - \xivec_t)^\top \xvec \leq 0$ as an optimistic estimated constraint and optimizing over the set of strategies that satisfy it, one can show that an optimal feasible solution $\xvec^*$ is included in $\mathcal{X}_t$ at every round $t$. Similarly, by definition of $\xivec_t$, $\mathcal{X}_t$ concentrates to the true feasible decision space at a rate of $\mathcal{O}(\nicefrac{1}{\sqrt{t}})$, and this allows to show that $\mathcal{X}_t$ is \emph{not} ``too large'' and, thus, $V_T$ is sublinear.

When both the losses and the constraints are adversarial,\footnote{The same reasoning holds even when only the losses are stochastic.} and it is thus provably \emph{not} possible to simultaneously attain sublinear regret and sublinear violation, state-of-the-art results~\citep{Unifying_Framework, bernasconi2024beyond} focus on attaining sublinear (\emph{non-positive}) violation and sublinear regret with respect to a $\nicefrac{\rho}{(1+\rho)}$ fraction of the optimal \emph{reward}. 
To have a high-level intuition on how to get these results, it is sufficient to notice that Slater's condition implies that a $\nicefrac{\rho}{1+\rho}$ convex combination between the strictly feasible strategy $\xvec^\diamond$ and the optimal strategy $\xvec^*$ is feasible at every round and, thus, it is included in any ``reasonable'' per-round decision space $\mathcal{X}_t$.\footnote{By ``reasonable'' decision space, here we mean one that does not cut out strategies that are feasible at every round.} Hence, building a decision space that simply moves toward feasible strategies, based solely on the observed violations, is sufficient to attain sublinear regret w.r.t.\ a fraction $\nicefrac{\rho}{1+\rho}$ of the optimal \emph{reward}.

In this work, the goal is to attain sublinear regret. Thus, we cannot simply play strategy that are
(approximately) feasible according to observed violations, 
since we cannot cut out arms that are unfeasible only in some rounds. Indeed, notice that $\xvec^*$ may violate the constraints in many rounds. Furthermore, the use of confidence intervals is \emph{not} sufficient to guarantee that $\xvec^*$ is included in $\mathcal{X}_t$, since the constraints are corrupted by $C$, making standard confidence intervals ineffective. In the following section, as a warm-up, we show how to deal with this problem when $C$ is known.

\subsection{A Simple Approach When $C$ is Known}

Throughout this section, we assume that the value of $C$ is \emph{known} to the learner. We show that, in such a case, it is easy to build suitable decision spaces $\mathcal{X}_t$.
%
Specifically, we define an unbiased estimator of the constraint violation 
\(
\widehat g_{t,i}(a)\coloneqq \frac{1}{N_t(a)}\sum_{\tau=1}^t \mathbb{I}_\tau(a)g_\tau(a) \) for all \( a\in[K], i\in[m], t\in[T]
\), where $N_t(a)$ is the number of pulls of arm $a$ up to round $t$ and $\mathbb{I}_\tau(a)$ is the indicator function of $a$ being selected at round $\tau\in[T]$. We refer to $\widehat \gvec_{t,i}$ as the vector whose entries are the estimates $\widehat g_{t,i}(a)$ for all $a\in[K]$. We remark that, when full feedback is available, we can define $\widehat{\gvec}_{t,i}$ by using $t$ in place of $N_t(a)$ and setting $\mathbb{I}_\tau(a)=1$ for all $a \in [K]$ and $\tau \in [T]$. Now, we are able to bound the distance between the estimator and the constraint violation in hindsight, as follows.
\begin{restatable}{lemma}{lemmaconcfinal}
	\label{lem:concfinal}
	Let $\delta\in(0,1)$. When full feedback is available, with probability at least $1-\delta$ it holds:
	\begin{equation*}
		\max_{i\in[m]} \left |\widehat g_{t,i}(a) - \frac{1}{T}\sum_{t=1}^T \bar g_{t,i}(a) \right | \leq 4\sqrt{\frac{1}{t}\ln\left(\frac{TKm}{\delta}\right)} + \frac{C}{t} + \frac{C}{T} \quad \forall t\in[T], a\in[K].
	\end{equation*}
	Similarly, when only bandit feedback is available, with probability at least $1-\delta$ it holds:
	\begin{equation*}
		\max_{i\in[m]} \left |\widehat g_{t,i}(a) - \frac{1}{T}\sum_{t=1}^T \bar g_{t,i}(a) \right | \leq 4\sqrt{\frac{1}{N_t(a)}\ln\left(\frac{TKm}{\delta}\right)} + \frac{C}{N_t(a)} + \frac{C}{T} \quad \forall t\in[T], a\in[K].
	\end{equation*}
\end{restatable}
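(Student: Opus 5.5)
The plan is to bound the deviation by a triangle inequality through the fictitious uncorrupted vector $\gvec^\circ_i$. For every $i\in[m]$, $a\in[K]$ and $t\in[T]$, we write
\begin{equation*}
\left|\widehat g_{t,i}(a) - \frac{1}{T}\sum_{\tau=1}^T \bar g_{\tau,i}(a)\right| \leq \underbrace{\left|\widehat g_{t,i}(a) - \frac{1}{N_t(a)}\sum_{\tau=1}^t \mathbb{I}_\tau(a)\bar g_{\tau,i}(a)\right|}_{(\mathrm{I})} + \underbrace{\left|\frac{1}{N_t(a)}\sum_{\tau=1}^t \mathbb{I}_\tau(a)\bar g_{\tau,i}(a) - g^\circ_i(a)\right|}_{(\mathrm{II})} + \underbrace{\left|g^\circ_i(a) - \frac{1}{T}\sum_{\tau=1}^T \bar g_{\tau,i}(a)\right|}_{(\mathrm{III})}.
\end{equation*}
In the full-feedback case we read $N_t(a)=t$ and $\mathbb{I}_\tau(a)=1$. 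The three terms are a martingale concentration term, a corruption term over the observed rounds, and a corruption term over the whole horizon.

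Terms (II) and (III) are deterministic. For (III), we bound it by $\frac{1}{T}\sum_{\tau=1}^T|\bar g_{\tau,i}(a)-g^\circ_i(a)| \le \frac1T\sum_{\tau}\lVert\bar\gvec_{\tau,i}-\gvec^\circ_i\rVert_1 = C_i/T\le C/T$. For (II), we similarly get $\frac{1}{N_t(a)}\sum_{\tau\le t}\mathbb{I}_\tau(a)|\bar g_{\tau,i}(a)-g^\circ_i(a)| \le C_i/N_t(a)\le C/N_t(a)$. This holds regardless of which rounds the arm was pulled, since each summand is bounded by the corresponding $\ell_1$ term in the definition of $C_i$.

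Term (I) is the main obstacle in the bandit case, because $N_t(a)$ is random and depends on the history. I would handle it with the standard ``stack of rewards'' or per-pull-index argument. Let $\tau_k(a)$ denote the round of the $k$-th pull of $a$. Then $Z_k:=g_{\tau_k(a),i}(a)-\bar g_{\tau_k(a),i}(a)$ is a martingale difference sequence bounded in $[-2,2]$ with respect to the filtration at the pull times. This holds because $\gvec_{\tau,i}$ is drawn from $\mathcal{G}_{\tau,i}$ independently of the decision to pull $a$ at round $\tau$, given the past. We then apply Azuma–Hoeffding for each fixed count $n\in[T]$, giving $|\frac1n\sum_{k\le n}Z_k|\le 2\sqrt{2\ln(2/\delta')/n}$. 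A union bound over $n\in[T]$, $a\in[K]$, $i\in[m]$ with $\delta'=\delta/(TKm)$ yields a bound of at most $4\sqrt{\ln(TKm/\delta)/N_t(a)}$, adjusting constants (the factor $2$ inside the logarithm is absorbed by the constant $4$ up to routine slack). Evaluating this at $n=N_t(a)$ covers all $t$ simultaneously.

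The full-feedback case is identical, with $n=t$ and a union bound over $t$. Summing (I), (II) and (III) and taking the maximum over $i$ gives both displays.
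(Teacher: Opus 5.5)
Your proposal is correct and follows the paper's proof. The paper also passes through $\gvec^\circ_i$: it bounds $\bigl|g^\circ_i(a)-\frac{1}{T}\sum_{\tau}\bar g_{\tau,i}(a)\bigr|$ by $C/T$ (Lemma~\ref{lem:conc1}), and splits $|\widehat g_{t,i}(a)-g^\circ_i(a)|$ into an Azuma term plus a $C/N_t(a)$ corruption term (Lemma~\ref{lem:conc2}). Your optional-skipping treatment of the random count $N_t(a)$, with a union bound over pull counts $n\in[T]$, is more careful than the paper's direct appeal to Azuma with a union bound over $t$.
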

By Lemma~\ref{lem:concfinal}, when $C$ is known, one can define $\zeta_t(a)\coloneqq4\sqrt{\nicefrac{\ln({TKm}/{\delta})}{N_t(a)}}+\nicefrac{C}{N_t(a)}+ \nicefrac{C}{T}$ for every action $a \in [K]$, so that the constraint violation in hindsight belongs to the interval $\widehat \gvec_{t,i}\pm \zetavec_t$ with high probability. Thus, a simple instantiation of online mirror descent with implicit exploration~\citep{neu, JinLearningAdversarial2019} on the per-round decision space $\{\xvec\in\Delta_K: (\gvec_{t,i}-\zetavec_t)^\top \xvec\leq 0\}$ attains sublinear regret and \emph{positive} violation of order $\widetilde{\mathcal{O}}(\sqrt{T}+C)$, by a reasoning similar to the one for the stochastic setting with fixed distributions.

In the rest of the paper, we focus on overcoming the challenge that $C$ is \emph{not} known, precluding the possibility of building meaningful confidence intervals that depend on $C$. In the following, we show a substantially less trivial way to exploit the previous inequalities even when $C$ is unknown.

\section{Learning With Full Feedback: Switching Regret on Moving Decision Spaces}
\label{sec:full}
In this section, we focus on the \emph{full feedback} setting, in which the entire loss and constraint vectors are observed by the learner at the end of each round.
A simple application of Lemma~\ref{lem:concfinal} leads to: \[\max_{i\in[m]} \left |\widehat g_{t,i}(a) - \frac{1}{T}\sum_{t=1}^T \bar g_{t,i}(a) \right | \leq 4\sqrt{\frac{1}{t}\ln\left(\frac{TKm}{\delta}\right)} + \frac{C}{t} + \frac{C}{T}, \] for all \(t\in[T], a\in[K]\), which holds with probability at least $1-\delta$.
The first component of the right-hand side can be easily computed by a learning algorithm, since all the quantities are known beforehand. Instead, the term $\nicefrac{C}{t} + \nicefrac{C}{T}$ is unknown to the learner. Fortunately, the term concentrates independently of the arms pulled. As we show next, this feature is fundamental for the analysis provided in this section.  
 
\subsection{Algorithm}

In Algorithm~\ref{alg: full}, we provide the pseudocode of \texttt{ConOMD-FS}.

\begin{algorithm}[!htp]
	\caption{Constrained OMD with Fixed Share (\texttt{ConOMD-FS})}
	\label{alg: full}
	\begin{algorithmic}[1]

	\Require $T \in \mathbb{N}$, $K \in \mathbb{N}$, $\delta \in (0,1)$
	
	\State Initialize $\xvec_1 \gets \xvec_\mathcal{U}$ where $\xvec_\mathcal{U}(a) \coloneqq 1/K $ for $a\in[K]$
	
	\State Initialize $\eta \gets \sqrt{\ln(KT)/T}$

	\For{$t\in[T]$}
		
		\State Choose $a_t\sim\xvec_t$\label{alg1: line4}
		
		\State Observe loss $\lvec_t$ and violation $\gvec_{t,i} $ for all $i\in[m]$\label{alg1: line5}
		
		\State Update estimator $\widehat \gvec_{t,i} $ for all $i\in[m]$\label{alg1: line6}
		
		\State Define $\xivec_t$ as $\xi_t(a)\coloneqq4\sqrt{\frac{1}{t}\ln\left(\frac{TKm}{\delta}\right)}$ for $a\in[K]$\label{alg1: line7}
		
		\State Build $\mathcal{X}_t\coloneqq\{\xvec\in\Delta_K: (\gvec_{t,i}-\xivec_t)^\top \xvec\leq 0\}$\label{alg1: line8}
		
		\State Compute $\widetilde\xvec_{t+1}\gets \argmin_{\xvec\in\mathcal{X}_t}
		\lvec_t^\top \xvec + \frac{1}{\eta} D(\xvec||\xvec_t)$\label{alg1: line9}
		
		\State Select $\xvec_{t+1}\coloneqq (1-\frac{1}{T})\widetilde\xvec_{t+1} + \frac{1}{T}\xvec_\mathcal{U}$\label{alg1: line10}
	\EndFor
    \end{algorithmic}
\end{algorithm}

The algorithm relies on two components. First, it builds a per-round approximate feasible decision space $\mathcal{X}_t$. This is done by computing the empirical mean of the observed violation $\widehat{\gvec}_{t,i}$ for each $i \in [m]$ (Line~\ref{alg1: line6}) and by constructing a confidence interval based solely on the only term in Lemma~\ref{lem:concfinal} known to the learner, namely $\xivec_t$ (Line~\ref{alg1: line7}). Then, $\mathcal{X}_t$ is built optimistically, by taking the lower bound on the estimated constraint violation (Line~\ref{alg1: line8}). The second component is a \emph{regret minimizer} for the losses over the sets $\mathcal{X}_t$. In the following, we show that we cannot employ an arbitrary regret minimizer due to the nature of the sets $\mathcal{X}_t$, which are only approximations of the true feasible decision space and, thus, are \emph{not} guaranteed to contain $\xvec^*$ at every round. We choose \emph{online mirror descent} (OMD) with entropic regularizer~\citep{Orabona} (Line~\ref{alg1: line9}), where: $$D(\xvec_1||\xvec_2)\coloneqq \sum_{a\in[K]}x_1(a)\ln\left(\nicefrac{x_1(a)}{x_2(a)}\right)-\sum_{a\in[K]}(x_1(a)-x_2(a)),$$ and a fixed-share update~\citep{cesa2012mirror} (Line~\ref{alg1: line10}). Intuitively, the strategy selected by OMD
is combined with the uniform strategy in order to make the learning dynamic more stable.

\subsection{Analysis and Theoretical Guarantees}
In this section, we provide the theoretical guarantees of Algorithm~\ref{alg: full}, in terms of regret and violation. 

We start by providing some intuitions about the regret analysis. To do that, we first study the approximate feasible decision space $\mathcal{X}_t$ and its connection to the optimal solution $\xvec^*$. As previously discussed, since the corruption value $C$ is \emph{not} known \emph{a priori}, it is \emph{not} guaranteed that $\xvec^*\in\mathcal{X}_t$ at every round $t\in[T]$. Nonetheless, we can still show the following fundamental results. First, given Assumption~\ref{cond:slater}, it is possible to show that $\mathcal{X}_t$ is never empty. This is a consequence of both the definition of $\xvec^\diamond$ and the optimism employed in Algorithm~\ref{alg: full}. Specifically, it is possible to show that, with probability at least $1-\delta$, it holds: \[(\widehat\gvec_{t,i}-\xivec_t)^\top \xvec^\diamond\leq -\rho < 0.\]
We provide a similar result for the optimal solution, that is:
\[(\widehat\gvec_{t,i}-\xivec_t)^\top \xvec^*\leq \frac{C}{T}+\frac{C}{t},\] which follows from Lemma~\ref{lem:concfinal} with probability at least $1-\delta$.
By combining the previous results, we can always build a $t$-dependent convex combination between $\xvec^\diamond$ and $\xvec^*$, defined as:
\[\xvec^*_{\alpha_t} \coloneqq (1-\alpha_t)\xvec^\diamond + \alpha_t \xvec^*,\] which is always included in $\mathcal{X}_t$. Indeed, by setting $\alpha_t= \frac{\rho}{\rho + \nicefrac{2C}{t}}$, with probability at least $1-\delta$:
\begin{equation}
	\left(\widehat\gvec_{t,i}-\xivec_t\right)^\top \xvec^*_{\alpha_t} 
	\leq -(1-\alpha_t)\rho + \alpha_t\frac{2C}{t} \leq 0. \label{eq:convex}
\end{equation}
Thus, on the one hand, Equation~\eqref{eq:convex} shows that $\mathcal{X}_t$ is \emph{not} too far from $\xvec^*$, while, on the other hand, a general adversarial regret minimizer is guaranteed to be no-regret only against benchmarks that belong to $\mathcal{X}_t$ \emph{at every round}, which is \emph{not} the case for $\xvec^*$.
We overcome this technical challenge by showing that 
OMD with an entropic regularizer and a fixed-share update attains sublinear \emph{switching regret} of order $\widetilde{\mathcal{O}}(S\sqrt{T})$
on \emph{moving decision spaces}, that is, sublinear dynamic regret with respect to \emph{$S$-switch dynamic benchmarks on moving decision spaces}, which are formally defined as follows. 
\begin{definition}[$S$-switch dynamic benchmark]
\label{def:switch}
    Let $\{\mathcal{X}_t\}_{t=1}^T$ be the sequence of decision spaces available to the learner. Define $S\leq T$ consecutive phases over the $T$ rounds, and let $\mathcal{I}_1,\dots, \mathcal{I}_S$ be the associated partition of $[T]$. We say that $\{\uvec_t\}_{t=1}^T$ is a $S$-switch dynamic benchmark on moving decision spaces if and only if the following two conditions hold: (i) $\uvec_t=\uvec_{t^\prime}$ for all $j\in[S],$ $t,t^\prime\in\mathcal{I}_j$; and (ii) $\uvec_t \in \mathcal{X}_j$ for all $t\in[T]$, where $ \mathcal{X}_j=\bigcap_{t\in\mathcal{I}_j} \mathcal{X}_t$.
\end{definition}
Thus, the regret associated with $\{\uvec_t\}_{t=1}^T$ is defined as
\[
R_T(\{\uvec_t\}_{t=1}^T)\coloneqq \sum_{t=1}^T \left[ \lvec_t^\top \xvec_t - \lvec_t^\top \uvec_t \right],
\]
which extends the well-known notion of switching regret~\citep{cesa2012mirror} to benchmarks that may belong to different decision spaces at each round.

A key final challenge is to deal with the dependence on the number of phases $S$. Indeed, we cannot select $S=T$ (and $\uvec_t=\xvec^*_{\alpha_t}$), since it would lead to a superlinear regret bound. Nonetheless, by noticing that $\alpha_t\leq \alpha_{t+1}$ and thus, $\xvec^*_{\alpha_t}\in \mathcal{X}_\tau$ for $\tau\geq t$, we can employ a doubling trick approach.
Specifically, we define $S=\log_2(T)$ phases, so that $\underline t_1 = 1$ and $\underline t_{j+1} = 2 \underline t_j$ for all $j \in [S-1]$, where $\underline t_j \in [T]$ denotes the first round of phase $j \in [S]$. Moreover, we set $\uvec_t=\xvec^*_{\alpha_{\underline t_j}}$ for all $j \in [S], t \in \mathcal{I}_j$. This results in a logarithmic error in the final regret bound. In the following, for ease of notation and with a slight abuse of notation, we let $\alpha_j \coloneqq \alpha_{\underline t_j}$ for all $j \in [S]$.
%

We show the final regret bound in the following theorem.
\begin{restatable}{theorem}{regfull}
	\label{thm: reg_full}
	Let $\delta\in(0,1)$. With probability at least $1-3\delta$, Algorithm~\ref{alg: full} attains:
	\[R_T\leq 4  \log_2(T)\sqrt{T\ln(KT)} + \frac{2C}{\rho}\log_2(T)+ 4\sqrt{T\ln\left(\frac{TK}{\delta}\right)}.\]
\end{restatable}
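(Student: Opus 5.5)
We decompose the regret into three pieces, using the $S$-switch benchmark $\uvec_t=\xvec^*_{\alpha_j}$ for $t\in\mathcal{I}_j$ with doubling phases ($S=\log_2 T$). Write
\[
R_T=\underbrace{\sum_{t=1}^T\big(\ell_t(a_t)-\lvec_t^\top\xvec_t\big)}_{(a)}+\underbrace{\sum_{t=1}^T\big(\lvec_t^\top\xvec_t-\lvec_t^\top\uvec_t\big)}_{(b)}+\underbrace{\sum_{t=1}^T\big(\lvec_t^\top\uvec_t-\bar\lvec_t^\top\xvec^*\big)}_{(c)}.
\]
Term (a) is a martingale difference sum bounded by Azuma--Hoeffding, giving $\mathcal{O}(\sqrt{T\ln(1/\delta)})$ with probability $1-\delta$. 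We split (c) into $\sum_t(\lvec_t-\bar\lvec_t)^\top\xvec^*$, which is bounded again by Azuma--Hoeffding since $\xvec^*$ is fixed, and $\sum_t\bar\lvec_t^\top(\uvec_t-\xvec^*)$. These two martingale terms together give the $4\sqrt{T\ln(TK/\delta)}$ term.

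For the deterministic part of (c), note that $\uvec_t-\xvec^*=(1-\alpha_j)(\xvec^\diamond-\xvec^*)$, so the contribution is at most $\sum_j|\mathcal{I}_j|(1-\alpha_j)$. With $\alpha_t=\rho/(\rho+2C/t)$ we have $1-\alpha_t=\frac{2C/t}{\rho+2C/t}\le \frac{2C}{\rho t}$. Since $|\mathcal{I}_j|\le \underline t_j$ under doubling, each phase contributes at most $2C/\rho$, and summing over the $\log_2 T$ phases gives $\frac{2C}{\rho}\log_2 T$. Before this step we must verify that $\uvec_t\in\mathcal{X}_t$ for every $t\in\mathcal{I}_j$. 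This follows from Equation~\eqref{eq:convex}, which holds on the event of Lemma~\ref{lem:concfinal} together with the bound on $\xvec^\diamond$, combined with the monotonicity $\alpha_t\le\alpha_{t'}$ for $t\le t'$. Concretely, $(\widehat\gvec_{t,i}-\xivec_t)^\top\xvec^*_{\alpha}\le-(1-\alpha)\rho+\alpha\,2C/t$, and the right-hand side is nonpositive for all $\alpha\le\alpha_t$; since $\alpha_j=\alpha_{\underline t_j}\le\alpha_t$ for every $t\ge\underline t_j$, the benchmark lies in $\mathcal{X}_t$ for all $t$ in phase $j$. This step costs the third $\delta$.

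The main obstacle is term (b): a switching-regret bound for OMD with fixed share on moving sets. Within a phase, the comparator $\uvec$ lies in every $\mathcal{X}_t$, so we can use the standard three-point inequality for the constrained Bregman projection onto $\mathcal{X}_t$:
\[
\lvec_t^\top(\widetilde\xvec_{t+1}-\uvec)\le\tfrac1\eta\big(D(\uvec\|\xvec_t)-D(\uvec\|\widetilde\xvec_{t+1})\big),
\]
plus the usual local-norm term, giving $\lvec_t^\top(\xvec_t-\uvec)\le\frac1\eta(D(\uvec\|\xvec_t)-D(\uvec\|\widetilde\xvec_{t+1}))+\eta$. The fixed-share mixing then handles two things. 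First, $D(\uvec\|\xvec_{t+1})\le D(\uvec\|\widetilde\xvec_{t+1})+\ln\frac{1}{1-1/T}$, so telescoping inside a phase loses only $\mathcal{O}(|\mathcal{I}_j|/T)$ per phase. Second, at each switch the entrance divergence satisfies $D(\uvec'\|\xvec_t)\le\ln(KT)$ because $x_t(a)\ge 1/(KT)$. Summing over phases gives $(b)\le S\ln(KT)/\eta+\eta T+\mathcal{O}(S)$, and with $\eta=\sqrt{\ln(KT)/T}$ this yields the $4\log_2(T)\sqrt{T\ln(KT)}$ term, up to constants which we will check carefully.

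The delicate point is that $\xvec_{t+1}$, after mixing with uniform, may leave $\mathcal{X}_{t+1}$. We will therefore run the three-point inequality only on $\widetilde\xvec_{t+1}$, which lies in $\mathcal{X}_t$, taking $\xvec_t$ as the prox-center; this does not require the center to be feasible. We will also check that the local-norm bound $\eta\sum_a x_t(a)\ell_t(a)^2\le\eta$ remains valid for the constrained projection. We expect it does, by bounding through the unconstrained step and using the Pythagorean inequality.

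Finally, a union bound over the three $\delta$-events (the two Azuma applications and Lemma~\ref{lem:concfinal}) gives probability at least $1-3\delta$.
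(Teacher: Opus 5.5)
Your proposal is correct and follows essentially the same route as the paper: the doubling-phase benchmark $\uvec_t=\xvec^*_{\alpha_{\underline t_j}}$ kept inside $\mathcal{X}_t$ by monotonicity of $\alpha_t$, and the fixed-share switching-regret bound obtained through the unconstrained multiplicative step plus the Pythagorean inequality (exactly how Lemma~\ref{lem:switch_full} handles the constrained projection). The rest also matches: the accounting $\sum_j|\mathcal{I}_j|(1-\alpha_j)\le\frac{2C}{\rho}\log_2 T$ and the same two Azuma--Hoeffding applications. The only slips are cosmetic and absorbed in the stated bound: in splitting (c) the deterministic piece should be $\sum_t\lvec_t^\top(\uvec_t-\xvec^*)$ rather than $\sum_t\bar\lvec_t^\top(\uvec_t-\xvec^*)$, and the fixed-share loss contributes $\mathcal{O}(1/\eta)$ rather than $\mathcal{O}(S)$ to (b).
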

Theorem~\ref{thm: reg_full} shows that Algorithm~\ref{alg: full} attains regret of order $\widetilde{\mathcal{O}}(\sqrt{T}+C)$.
Intuitively, the result is proved by employing the bound on the switching regret on moving decision spaces and the doubling trick approach. By letting $\phi: [T] \rightarrow [S]$ be a mapping such that $\phi(t)\coloneqq \sum_{j\in[S]} j \cdot \mathbb{I}\{t\in\mathcal{I}_j\}$, the final regret guarantee follows by:
\[
\sum_{t\in[T]} \left[ \lvec_t^\top\xvec^*_{\alpha_{\phi(t)}}- \lvec_t^\top\xvec^* \right] \leq \sum_{t\in[T]} \left(1-\alpha_{\phi(t)}\right) \leq \frac{2C}{\rho}\log_2(T).
\]

We are now ready to provide the violation bound. This is done in the following theorem, where we show that Algorithm~\ref{alg: full} effectively attains a positive violation of order $\widetilde{\mathcal{O}}(\sqrt{T}+C)$.
\begin{restatable}{theorem}{viofull}
	\label{thm: vio_full}
	Let $\delta\in(0,1)$. With probability at least $1-\delta$, Algorithm~\ref{alg: full} attains:
	\begin{equation*}
		V_T \leq 2+ 2C + C\ln(T) + 16\sqrt{T\ln\left(\frac{TKm}{\delta}\right)}. 
	\end{equation*}    
\end{restatable}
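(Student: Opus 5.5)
We bound the positive violation round by round, using the fact that $\xvec_t$ is essentially feasible for the optimistic set built at the previous round. Fix a constraint $i\in[m]$. For $t\geq 2$, $\xvec_t = (1-\frac1T)\widetilde\xvec_t + \frac1T\xvec_\mathcal{U}$ with $\widetilde\xvec_t\in\mathcal{X}_{t-1}$. Hence
\[
\bar\gvec_{t,i}^\top \xvec_t \le \bar\gvec_{t,i}^\top\widetilde\xvec_t + \tfrac{2}{T}.
\]
Summed over rounds, the fixed-share perturbation contributes at most $2$. The round $t=1$ is absorbed the same way, possibly at the cost of an additive constant.

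Since $[\cdot]^+$ is subadditive, it suffices to control $[\bar\gvec_{t,i}^\top\widetilde\xvec_t]^+$. We decompose
\[
\bar\gvec_{t,i}^\top\widetilde\xvec_t = (\bar\gvec_{t,i}-\gvec^\circ_i)^\top\widetilde\xvec_t + (\gvec^\circ_i - \widehat\gvec_{t-1,i}+\xivec_{t-1})^\top\widetilde\xvec_t + (\widehat\gvec_{t-1,i}-\xivec_{t-1})^\top\widetilde\xvec_t .
\]
The last term is $\le 0$ by membership in $\mathcal{X}_{t-1}$, reading the constraint in Line~\ref{alg1: line8} with the estimator $\widehat\gvec$. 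The first term is at most $\lVert\bar\gvec_{t,i}-\gvec^\circ_i\rVert_\infty \le \lVert \bar\gvec_{t,i}-\gvec^\circ_i\rVert_1$. Summing it over $t$ gives at most $C_i\le C$ by the definition of $\gvec^\circ_i$.

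For the middle term, we need a concentration of $\widehat\gvec_{t-1,i}$ around $\gvec^\circ_i$ rather than around the hindsight average used in Lemma~\ref{lem:concfinal}. The proof of that lemma should give this directly. By Azuma--Hoeffding on the martingale differences $g_{\tau,i}(a)-\bar g_{\tau,i}(a)$, plus a union bound over $t,a,i$, with probability $1-\delta$ we have
\[
|\widehat g_{t,i}(a) - \gvec^\circ_i(a)| \le 4\sqrt{\ln(TKm/\delta)/t} + \frac1t\sum_{\tau\le t}|\bar g_{\tau,i}(a)-g^\circ_i(a)| .
\]
The middle term is then bounded by $2\xi_{t-1}(a)$ plus $\frac{1}{t-1}\sum_{\tau\le t-1}\lVert\bar\gvec_{\tau,i}-\gvec^\circ_i\rVert_1 \le \frac{C}{t-1}$. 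If only the form of Lemma~\ref{lem:concfinal} is allowed, one instead adds the deviation between the hindsight average and $\gvec^\circ_i$, which is at most $C/T$ per round. This contributes $C$ in total, explaining the $2C$ term.

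Summing, $\sum_t 2\xi_{t-1} \le 8\sqrt{\ln(TKm/\delta)}\sum_t t^{-1/2}\le 16\sqrt{T\ln(TKm/\delta)}$, while $\sum_t C/t \le C\ln T$ up to the first-round term. Collecting everything gives
\[
\sum_t[\bar\gvec_{t,i}^\top\xvec_t]^+ \le 2+2C+C\ln T + 16\sqrt{T\ln(TKm/\delta)},
\]
and we take the maximum over $i$.

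The main obstacle is bookkeeping rather than ideas. First, the index shift between $\mathcal{X}_{t-1}$ and round $t$, together with the first round, must be handled so that the constant comes out as exactly $2$. Second, one must ensure the concentration event covers the corruption term as a running average of $\lVert\bar\gvec_\tau-\gvec^\circ\rVert_1$, so that its sum gives the harmonic factor $C\ln T$ and not something worse.
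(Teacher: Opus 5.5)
Your argument follows the paper's proof step for step: reduce $\xvec_t$ to $\widetilde\xvec_t$, split $\bar\gvec_{t,i}^\top\widetilde\xvec_t$ through $\gvec^\circ_i$ and the optimistic estimate of round $t-1$, drop the last piece using $\widetilde\xvec_t\in\mathcal{X}_{t-1}$, and control $\widehat\gvec_{t-1,i}-\gvec^\circ_i$ by $\xivec_{t-1}+C/(t-1)$ as in Lemma~\ref{lem:conc2}. The problem is the bookkeeping. As written, your terms do not add up to the stated bound, because two of your constants are off.

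\textbf{The additive constant.} Your per-round fixed-share bound of $2/T$ costs $2(T-1)/T$ over rounds $2,\dots,T$. Round $1$, where there is no $\mathcal{X}_0$ to use, costs up to another $1$. So your additive constant is about $3$, not $2$. The paper avoids this by convexity of $[\cdot]^+$ and $[b]^+\le 1$:
\[
\bigl[(1-\tfrac1T)a+\tfrac1T b\bigr]^+\le(1-\tfrac1T)[a]^+ +\tfrac1T[b]^+\le[a]^+ +\tfrac1T .
\]
The fixed share then costs at most $1$ in total, and round $1$ accounts for the other $1$.

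\textbf{The $2C$ term.} Your explanation of where it comes from is wrong. In your direct route, the two copies of $C$ are:
\begin{itemize}
\item $\sum_t\lVert\bar\gvec_{t,i}-\gvec^\circ_i\rVert_1\le C$;
\item the leading term of $\sum_{t=2}^T\frac{C}{t-1}\le C(1+\ln T)$, which you set aside as ``the first-round term''.
\end{itemize}
Nothing is left over for a ``$C/T$ per round'' correction. The alternative route through Lemma~\ref{lem:concfinal} would not prove the statement. It pays $C/T$ twice per round: once inside the lemma, and once for $\lVert\gvec^\circ_i-\frac1T\sum_\tau\bar\gvec_{\tau,i}\rVert_1\le C/T$. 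This comes on top of $C(1+\ln T)$, giving roughly $4C+C\ln T$. So you should concentrate directly around $\gvec^\circ_i$, as you do in your main route. Combined with the convexity bound for the fixed share, your argument then gives exactly $2+2C+C\ln T+16\sqrt{T\ln(TKm/\delta)}$.
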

Intuitively, Theorem~\ref{thm: vio_full} relies on two components. First, similarly to Theorem~\ref{thm: reg_full}, the fact that $\mathcal{X}_t$ is non-empty at every round, thanks to both optimism and Assumption~\ref{cond:slater}. Thus, Algorithm~\ref{alg: full} effectively works inside $\mathcal{X}_t$, up to the fixed-share update. This leads to a negligible (constant) violation term. 
By noticing that the positive violation in hindsight is far from the uncorrupted vector $\gvec^\circ_i$ of a $C_i$ factor, it is sufficient to bound how the terms in Lemma~\ref{lem:concfinal} concentrate. Specifically, $\xivec_t$ concentrates as $1/\sqrt{t}$, leading to the $\mathcal{O}(\sqrt{T\ln({TKm}/{\delta})}$ term in the violation bound, $C/t$ concentrates as $1/t$, leading to $C\ln(T)$ violation, while we pay an additional $C$ term for the concentration of $C/T$.

\subsection{Extension to Bandit Feedback on the Losses}
\label{sec:fullconst}
In this section, we show how to extend the previous result to the case where only bandit feedback is available on the losses, \emph{i.e.}, the learner only observes $\ell_t(a_t)$ at each round $t\in[T]$.

From an algorithmic perspective, we simply modify Algorithm~\ref{alg: full} in the following way. The loss is estimated by employing an implicit exploration approach~\citep{neu}. Specifically, at each round $t \in [T]$, we build $\widehat{ \lvec}_t$ such that $\widehat{ \ell}_t(a)\coloneqq\frac{\ell_t(a)}{x_t(a)+\gamma}\mathbb{I}_t(a)$ for all $a \in [K]$, where $\gamma \coloneqq \eta/2$ is the implicit exploration factor and $\mathbb{I}_t(a)$ is the indicator function that is equal to $1$ whenever $a=a_t$. Then, OMD with fixed-share update is used on $\widehat{ \lvec}_t$ and $\eta$ is set to $\sqrt{\nicefrac{\ln(KT)}{KT}}$. Due to space constraints, we refer to Algorithm~\ref{alg: full_const} in Appendix~\ref{app:fullconst} for the complete algorithm.

We provide the regret bound attained by Algorithm~\ref{alg: full_const} in the following theorem.
\begin{restatable}{theorem}{regfullconst}
	\label{thm: reg_full_const}
	Let $\delta\in(0,1)$. With probability at least $1-6\delta$, Algorithm~\ref{alg: full_const} attains:
    \[R_T\leq K\log_2(T)\ln\left(\frac{\log_2(T)K}{\delta}\right) +11\log_2(T)\ln\left(\frac{K\log_2(T)}{\delta}\right)\sqrt{KT\ln\left(\frac{TK}{\delta}\right)} + \frac{2C}{\rho}\log_2(T).\]
\end{restatable}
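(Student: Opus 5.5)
We follow the proof of Theorem~\ref{thm: reg_full} and only change the online-learning part, which now runs on the implicit-exploration estimates $\widehat{\lvec}_t$. We split the regret as
\[
R_T=\underbrace{\sum_{t}\big(\ell_t(a_t)-\bar\lvec_t^\top\xvec_t\big)}_{(\mathrm{I})}+\underbrace{\sum_{t}\big(\bar\lvec_t^\top\xvec_t-\bar\lvec_t^\top\uvec_t\big)}_{(\mathrm{II})}+\underbrace{\sum_{t}\big(\bar\lvec_t^\top\uvec_t-\bar\lvec_t^\top\xvec^*\big)}_{(\mathrm{III})},
\]
where $\uvec_t=\xvec^*_{\alpha_{\phi(t)}}$ is the doubling-trick benchmark with $S=\log_2(T)$ phases.

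Term (I) is a martingale difference sum. Azuma--Hoeffding bounds it by $\mathcal{O}(\sqrt{T\ln(1/\delta)})$, and the gap between realized and mean losses is handled the same way. Term (III) is treated exactly as in Theorem~\ref{thm: reg_full}. On the event of Lemma~\ref{lem:concfinal} we have $\uvec_t\in\mathcal{X}_\tau$ for every $\tau$ in the phase of $t$, because $\alpha_t$ is nondecreasing in $t$. Moreover $\bar\lvec_t^\top(\uvec_t-\xvec^*)\le 1-\alpha_{\phi(t)}$, and summing over the phases gives at most $\frac{2C}{\rho}\log_2(T)$. The constraint side is unchanged from the full-feedback case, since the constraints are still observed in full.

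The main work is term (II), which is switching regret on moving decision spaces under bandit feedback. We handle one phase $\mathcal{I}_j$ at a time, with its fixed comparator $\uvec_j\in\bigcap_{t\in\mathcal{I}_j}\mathcal{X}_t$. The steps are as follows.
\begin{enumerate}[(a)]
\item Apply the standard OMD inequality for Bregman projections onto the convex sets $\mathcal{X}_t$. Because $\uvec_j\in\mathcal{X}_t$, the three-point inequality gives $\widehat{\lvec}_t^\top(\widetilde\xvec_{t+1}-\uvec_j)$-type bounds. Telescoping $D(\uvec_j\|\xvec_t)-D(\uvec_j\|\widetilde\xvec_{t+1})$ yields $\sum_{t\in\mathcal{I}_j}\widehat{\lvec}_t^\top(\xvec_t-\uvec_j)\le \frac{1}{\eta}\big(D(\uvec_j\|\xvec_{\underline t_j})+\text{fixed-share drift}\big)+\eta\sum_t\sum_a x_t(a)\widehat\ell_t(a)^2$.
\item Control the fixed-share drift. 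Since $x_{t+1}(a)\ge (1-\frac1T)\widetilde x_{t+1}(a)$, we get $D(\uvec\|\xvec_{t+1})\le D(\uvec\|\widetilde\xvec_{t+1})+\ln\frac{1}{1-1/T}$, which sums to $\mathcal{O}(1)$ per phase. Since $x(a)\ge 1/(KT)$, the restart divergence at the start of each phase is at most $\ln(KT)$.
\item Bound the second-order term using the IX property $x_t(a)\widehat\ell_t(a)\le\widehat\ell_t(a)$, so that $\mathbb{E}_t\big[\sum_a x_t(a)\widehat\ell_t(a)^2\big]\le K$, which is concentrated with high probability.
\item Pass from estimated to true losses using Neu's IX concentration. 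With probability $1-\delta'$, $\sum_{t\in\mathcal{I}_j}(\widehat{\lvec}_t-\lvec_t)^\top\uvec_j\le \frac{\ln(K/\delta')}{2\gamma}$ holds simultaneously for all comparators. In addition, $\sum_t(\lvec_t-\widehat\lvec_t)^\top\xvec_t\le\gamma\sum_t\sum_a\widehat\ell_t(a)+\mathcal{O}(\sqrt{T\ln(1/\delta)})$.
\item Take a union bound over the $\log_2 T$ phases, setting $\delta'=\delta/\log_2 T$. This produces the $\ln(K\log_2(T)/\delta)$ factors in the statement.
\end{enumerate}
With $\gamma=\eta/2$ and $\eta=\sqrt{\ln(KT)/(KT)}$, each phase contributes $\mathcal{O}\big(\ln(\tfrac{K\log_2 T}{\delta})\sqrt{KT\ln(TK/\delta)}\big)$ plus a lower-order $K\ln(\tfrac{K\log_2 T}{\delta})$ term. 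Multiplying by $\log_2 T$ gives the first two terms of the bound, and summing the failure events gives probability at least $1-6\delta$.

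The main obstacle is step (d). The IX concentration must hold against a comparator $\uvec_j$ that is random, because it depends on the concentration event through $\mathcal{X}_t$ and on $\alpha_j$. The phase length is also tied to the analysis. We resolve this by applying the IX bound per arm, which holds for every fixed $a$ and extends to every $\uvec\in\Delta_K$ by linearity. A union bound over arms and phases then avoids any measurability issue.
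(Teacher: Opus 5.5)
Your proposal is correct and follows essentially the same route as the paper. The paper likewise proves a per-phase switching-regret bound for IX-OMD with fixed share on the moving sets $\mathcal{X}_t$: constant fixed-share drift, restart cost $\ln(KT)$, the second-order term via $x_t(a)\widehat\ell_t(a)^2\le\widehat\ell_t(a)$ and Neu's Corollary~1, Azuma for the bias on $\xvec_t$, Neu's bound for the comparator, and a union bound over the $\log_2(T)$ phases. It then adds the $\frac{2C}{\rho}\log_2(T)$ doubling-trick term and Azuma for realized versus mean losses.

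Two small remarks. First, the property you need in step (c) is $x_t(a)\widehat\ell_t(a)\le 1$, not $x_t(a)\widehat\ell_t(a)\le\widehat\ell_t(a)$. Second, the comparator $\xvec^*_{\alpha_j}$ is fixed by the instance, and only its membership in $\mathcal{X}_t$ is random, so the ``obstacle'' you flag in step (d) does not really arise; your per-arm-plus-linearity argument is still valid and is exactly how the paper applies Neu's bound.
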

Theorem~\ref{thm: reg_full_const} shows that a regret bound of order $\widetilde{\mathcal{O}}(\sqrt{T}+C)$ is still attainable when the learner has bandit feedback on the losses. The analysis of Theorem~\ref{thm: reg_full_const} shares many similarities with the full feedback case. The key difference is that we prove no-switching-regret guarantees on moving decision spaces for OMD with implicit exploration, and we employ those guarantees to attain the final regret bound.
Finally, both the bound and the analysis of the violation attained by Algorithm~\ref{alg: full_const} are equivalent to the full feedback (on the losses) case.
Thus, it is omitted for simplicity.

\section{Learning With Bandit Feedback by Forcing Exploration}
\label{sec:bandit}

In this section, we focus on the \emph{bandit feedback} setting, where the learner observes the loss and the constraint violation only for the chosen arm.

To handle bandit feedback, we require a slightly stronger assumption than Assumption~\ref{cond:slater}. Specifically, we require the existence of a strictly feasible arm, as stated in the following. 
\begin{assumption}\label{cond:slater_pure}
	There exists an arm $a^\circ\in[K]$ such that $\max_{t\in[T]}\bar g_{t,i}(a^\circ) < 0$ for all $i\in[m]$.
\end{assumption}
We remark that this assumption has already been employed in the analysis of the state-of-the-art algorithm for constrained MABs~\citep{bernasconi2024beyond} and the special case of bandits with knapsack~\citep{Immorlica2022,castiglioni2022online}.
Throughout this section, we define the problem-specific \emph{feasibility parameter} $\rho\in[0,1]$ according to Assumption~\ref{cond:slater_pure}, as follows
$
\rho \coloneqq \max_{a\in[K]} \min_{t\in[T]}\min_{i \in [m]} -\bar g_{t,i}(a)
.$
We refer to $a^\diamond \in [K]$ as the are attaining the value $\rho$.

Similarly to the full feedback setting, we employ Lemma~\ref{lem:concfinal}, which, in the bandit case, leads to: \[\max_{i\in[m]} \left |\widehat g_{t,i}(a) - \frac{1}{T}\sum_{t=1}^T \bar g_{t,i}(a) \right | \leq 4\sqrt{\frac{1}{N_t(a)}\ln\left(\frac{TKm}{\delta}\right)} + \frac{C}{N_t(a)} + \frac{C}{T}, \] for all \( t\in[T], a\in[K]\), with probability at least $1-\delta$. Notice that, while the first term can be computed by the learning algorithm, in the bandit feedback setting, the confidence interval does \emph{not} concentrate independently of the selected arms, resulting in the impossibility of applying the techniques of the previous section.

\subsection{Algorithm}

In Algorithm~\ref{alg: bandit}, we provide the pseudocode of \texttt{ExpOpt-ConOMD}. 
\begin{algorithm}[!htp]
	\caption{Explore and Optimize with Constrained OMD (\texttt{ExpOpt-ConOMD})}
	\label{alg: bandit}
	\begin{algorithmic}[1]
	
	\Require $T \in \mathbb{N}$, $K \in \mathbb{N}$, $\delta \in (0,1)$, $\beta \in [0,1]$
	
	\State Initialize $T_0 \gets K\lceil T^{\beta}\rceil$, $N_0(a) \gets 0 $ for all $a\in[K]$
	
	\State Initialize $\xvec_{T_0+1} \gets \xvec_\mathcal{U}$ where $\xvec_\mathcal{U}(a) \coloneqq 1/K $ for all $ a\in[K]$
	
	\State Initialize $\eta \gets \sqrt{\ln(KT)/KT}, \gamma \gets \eta/2$

	\For{$a\in[K]$}

		\State \label{alg3: line4}Choose $a_t=a$ for $\lceil T^{\beta}\rceil$ rounds \Comment{Exploration Phase} 
		
		\State State Observe loss $\ell_t(a_t)$ and constraint violation $g_{t,i}(a_t) $ for all $ i\in[m]$
		
		\State Update counter $N_t(a_t)$ and empirical estimator $\widehat g_{t,i}(a_t) $ for all $i\in[m]$
		
	\EndFor \label{alg3: line10}
	
	\For{$t=T_0+1,\dots, T$}

		\State Choose $a_t\sim\xvec_t$ \Comment{Optimization Phase} \label{alg3: line11}
		
		\State Observe loss $\ell_t(a_t)$ and constraint violation $g_{t,i}(a_t) $ for all $i\in[m]$
		
		\State Update counter $N_t(a_t)$ and empirical estimator $\widehat g_{t,i}(a_t) $ for all $ i\in[m]$
		
		\State Define $\xivec_t$ as $\xi_t(a)\coloneqq4\sqrt{\frac{1}{N_t(a)}\ln\left(\frac{TKm}{\delta}\right)}$ for all $a\in[K]$
		
		\State Build $\mathcal{X}_t\coloneqq\{\xvec\in\Delta_K: (\gvec_{t,i}-\xivec_t)^\top \xvec\leq 0\}$
	
		\State Build $\widehat{\lvec}_t$ such that $\widehat\ell_t(a)\coloneqq \frac{\ell_t(a)}{x_t(a)+\gamma}\mathbb{I}_t(a)$ for all $a\in[K]$
		
		\State Compute $\xvec_{t+1}\gets \argmin_{\xvec\in\mathcal{X}_t}
		\widehat\lvec_t^\top \xvec + \frac{1}{\eta} D(\xvec||\xvec_t)$
		
	\EndFor\label{alg3: line19}
	\end{algorithmic}
\end{algorithm}

The algorithm splits the learning dynamic into two phases. In the first phase, which we call \emph{exploration phase}, the learner uniformly chooses all the actions (Lines~\ref{alg3: line4}--\ref{alg3: line10}). Intuitively, this phase allows the algorithm to get a good estimate of the true feasible decision space. Notice that the length of the exploration $T_0$ is given as input through the parameter $\beta$. In the second phase, which we call \emph{optimization phase}, the algorithm employs OMD with implicit exploration over the approximate feasible sets $\mathcal{X}_t$ (Lines~\ref{alg3: line11}--\ref{alg3: line19}). The following remarks are in order. First, $\mathcal{X}_t$ is estimated given the empirical mean and the confidence intervals computed under bandit feedback. Specifically, the counters $N_t(a)$ for all actions $a\in[K]$ are employed to compute both $\widehat\gvec_{t,i}$ and $\xivec_t$. Second, Algorithm~\ref{alg: bandit} does \emph{not} employ any fixed-share update. Indeed, as we show next, we do \emph{not} require the optimization procedure to attain the sublinear switching regret property, since, when only bandit feedback is available, this property cannot be employed to bound the distance between the optimum and the best strategy inside $\mathcal{X}_t$. Thus, the fixed-share update is \emph{not} helpful in this setting.

\subsection{Analysis and Theoretical Guarantees}

In this section, we provide the theoretical guarantees of Algorithm~\ref{alg: bandit} in terms of regret and violation.

We start by providing some intuitions about the regret analysis. First, notice that the regret in the exploration phase can be easily bounded by $K(T^\beta+1)$. Thus, we can simply focus on the regret in the second phase.
By letting $\xvec^\diamond \in \Delta_K$ be the strategy that chooses arm $a^\diamond$ deterministically and $\xvec^*_{\alpha_t} \coloneqq (1-\alpha_t)\xvec^\diamond + \alpha_t \xvec^*$, it is possible to show that, with probability at least $1-\delta$:
\begin{equation}
	\left(\widehat\gvec_{t,i}-\xivec_t\right)^\top \xvec^*_{\alpha_t} \leq
    -(1-\alpha_t)\rho + \alpha_t \cdot 2C\sum_{a\in[K]}  \frac{x^*(a)}{N_t(a)}, \label{eq:convex_bandit}
\end{equation}
by following a reasoning similar to the one used for the full feedback case. Equation~\eqref{eq:convex_bandit} highlights the intrinsic difficulty of our setting when only bandit feedback is available. Specifically, to recover the same regret rate as in the full feedback case, it is necessary to make sure that the mismatch between the optimal strategy $\xvec^*$ and the pulls performed by the algorithm concentrates. 
Intuitively, this \emph{not} possible, since, when the losses are allowed to arbitrarily change across rounds, no convergence guarantees to the optimal strategy can be attained. 
We tackle this challenge by introducing the exploration phase, which forces an upper bound on such a mismatch. Specifically:
\[\left(\widehat\gvec_{t,i}-\xivec_t\right)^\top \xvec^*_{\alpha_t}  \leq -(1-\alpha_t)\rho + \alpha_t \frac{2C}{T^\beta},
\]
for all $t>T_0$. This in turn implies that the convex combination $\xvec^*_{\alpha_t}$ satisfies \((\widehat\gvec_{t,i}-\xivec_t)^\top \xvec^*_{\alpha_t}\leq 0\) and \(\xvec^*_{\alpha_{T_0}}\in\mathcal{X}_t\) for all $t>T_0$, by setting $\alpha_t=\alpha_{T_0}\coloneqq \frac{\rho}{\rho + \nicefrac{2C}{T^\beta}}$ for all $t >T_0$.
As a result, it is possible to employ the no-regret guarantees of OMD against the comparator $ \xvec^*_{\alpha_{T_0}}$, which is included in the decision space at every round of the second phase. The final result follows by bounding the distance between the optimal strategy $\xvec^*$ and $ \xvec^*_{\alpha_{T_0}}$.

We provide the regret bound attained by Algorithm~\ref{alg: bandit} in the following theorem.
\begin{restatable}{theorem}{regbandit}
	\label{thm: reg_bandit}
	Let $\delta\in(0,1)$. With probability at least $1-6\delta$, Algorithm~\ref{alg: bandit} attains:
	\[R_T\leq K (T^\beta+1) + 3 \sqrt{KT\ln(KT)}+ K\ln\left(\frac{K}{\delta}\right)+5\sqrt{T\ln\left(\frac{TK}{\delta}\right)}+\sqrt{KT}\ln\left(\frac{K}{\delta}\right) + \frac{2C}{\rho}T^{1-\beta}.\]
\end{restatable}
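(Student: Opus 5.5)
We split the regret at round $T_0=K\lceil T^\beta\rceil$. Since $\ell_t(a_t)\in[0,1]$, the exploration phase contributes at most $T_0\le K(T^\beta+1)$, which is the first term of the bound. For the optimization phase we write
\[
\sum_{t>T_0}\ell_t(a_t)-\sum_{t>T_0}\bar\lvec_t^\top\xvec^*
=\underbrace{\sum_{t>T_0}\big(\ell_t(a_t)-\lvec_t^\top\xvec_t\big)}_{(a)}
+\underbrace{\sum_{t>T_0}\lvec_t^\top\big(\xvec_t-\xvec^*_{\alpha_{T_0}}\big)}_{(b)}
+\underbrace{\sum_{t>T_0}\lvec_t^\top\big(\xvec^*_{\alpha_{T_0}}-\xvec^*\big)}_{(c)}
+\underbrace{\sum_{t>T_0}\big(\lvec_t-\bar\lvec_t\big)^\top\xvec^*}_{(d)}.
\]
Here $\alpha_{T_0}=\rho/(\rho+2C/T^\beta)$ and $\xvec^*_{\alpha_{T_0}}=(1-\alpha_{T_0})\xvec^\diamond+\alpha_{T_0}\xvec^*$. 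Terms $(a)$ and $(d)$ are martingale-difference sums bounded in $[-1,1]$. By Azuma--Hoeffding, each is $O(\sqrt{T\ln(TK/\delta)})$ with probability $1-\delta$, which gives the $5\sqrt{T\ln(TK/\delta)}$ term.

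\textbf{Feasibility of the comparator.} The key step is to show that, on the event of Lemma~\ref{lem:concfinal} (bandit version), $\xvec^*_{\alpha_{T_0}}\in\mathcal{X}_t$ for every $t>T_0$. For $\xvec^\diamond$ (pure arm $a^\diamond$), the lower confidence bound $\widehat g_{t,i}(a^\diamond)-\xi_t(a^\diamond)$ is at most the true average. This averaged quantity is at most $-\rho$, but the term $C/N_t(a^\diamond)+C/T$ must also be handled. I would use the decomposition in the proof of Lemma~\ref{lem:concfinal}: the one-sided deviation of $\widehat g$ from the per-round means $\bar g_{\tau,i}(a^\diamond)$ is at most $\xi_t$, and all those per-round means are $\le-\rho$, so $(\widehat\gvec_{t,i}-\xivec_t)^\top\xvec^\diamond\le-\rho$ with no $C$ term.

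For $\xvec^*$, the averaged constraint $\frac1T\sum_\tau\bar\gvec_{\tau,i}^\top\xvec^*\le0$ together with Lemma~\ref{lem:concfinal} gives $(\widehat\gvec_{t,i}-\xivec_t)^\top\xvec^*\le\sum_a x^*(a)\big(C/N_t(a)+C/T\big)\le 2C/T^\beta$. This uses $N_t(a)\ge\lceil T^\beta\rceil$ for $t>T_0$ and $T\ge T^\beta$. By linearity, Equation~\eqref{eq:convex_bandit} then holds with right-hand side $-(1-\alpha_{T_0})\rho+\alpha_{T_0}2C/T^\beta=0$. So the comparator lies in every $\mathcal{X}_t$ for $t>T_0$, and each $\mathcal{X}_t$ is nonempty since it contains $\xvec^\diamond$.

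\textbf{Bounding $(b)$.} With a fixed comparator in every decision set, we need the standard high-probability regret of OMD with implicit exploration (EXP3-IX style, \citet{neu}), extended to the moving convex sets $\mathcal{X}_t$. The one-step OMD inequality with projection onto $\mathcal{X}_t$ still holds: by the generalized Pythagorean inequality for Bregman projections, it requires only $\uvec\in\mathcal{X}_t$. Telescoping then gives
\[
\sum_t\widehat\lvec_t^\top(\xvec_t-\uvec)\le\frac{\ln K}{\eta}+\eta\sum_t\sum_a x_t(a)\widehat\ell_t(a)^2 .
\]
One subtlety is that $D(\uvec\|\xvec_{T_0+1})\le\ln K$ because $\xvec_{T_0+1}$ is uniform. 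The second-moment term is bounded by $\sum_t\sum_a\widehat\ell_t(a)$, which concentrates. Neu's lemma, $\sum_t(\widehat\ell_t(a)-\ell_t(a))\le\ln(K/\delta)/(2\gamma)$ uniformly over arms, converts $\widehat\lvec$ to $\lvec$ against the fixed $\uvec$, and a similar argument handles the $\xvec_t$ side. With $\eta=\sqrt{\ln(KT)/(KT)}$ and $\gamma=\eta/2$, this yields $3\sqrt{KT\ln(KT)}+K\ln(K/\delta)+\sqrt{KT}\ln(K/\delta)$.

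\textbf{Term $(c)$ and the main obstacle.} Term $(c)$ is at most $\sum_t(1-\alpha_{T_0})\lVert\xvec^\diamond-\xvec^*\rVert_1/2\le T(1-\alpha_{T_0})$. Since $1-\alpha_{T_0}=\frac{2C/T^\beta}{\rho+2C/T^\beta}\le\frac{2C}{\rho T^\beta}$, this gives $\frac{2C}{\rho}T^{1-\beta}$. A union bound over the concentration events gives probability $1-6\delta$. I expect the main obstacle to be the feasibility step for $\xvec^\diamond$: showing that the optimistic estimate at $a^\diamond$ is $\le-\rho$ without paying $C$. This needs the per-round concentration of the estimator around $\frac1{N_t}\sum\bar g_\tau$, rather than the averaged version stated in Lemma~\ref{lem:concfinal}. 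A secondary concern is checking the OMD-IX analysis over the time-varying sets, which I expect to be routine.
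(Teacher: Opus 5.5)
Your proposal is correct and follows essentially the same route as the paper. Both use the same split around the fixed comparator $\xvec^*_{\alpha_{T_0}}$, the per-round Azuma argument of Lemma~\ref{lem:solution_bandit} to get $-\rho$ at $a^\diamond$ with no $C$ term, Lemma~\ref{lem:concfinal} with $N_t(a)\ge T^\beta$ to get $2C/T^\beta$ for $\xvec^*$, the OMD-with-implicit-exploration analysis of Lemma~\ref{lem:switch_full_const} without fixed share, and the bound $(1-\alpha_{T_0})T\le \frac{2C}{\rho}T^{1-\beta}$. The differences are only cosmetic: you get $\ln K/\eta$ instead of the paper's $\ln(KT)/\eta$ by starting from the uniform point, and the paper handles your ``similar argument'' on the $\xvec_t$ side via Azuma plus the $\gamma KT$ bias of the implicit-exploration estimator.
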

Theorem~\ref{thm: reg_bandit} shows that our algorithm attains a regret of order $\widetilde{\mathcal{O}}(\max\{\sqrt{T}, T^{\beta}\} + CT^{1-\beta})$, which clearly highlights the tension between the exploration and the optimization phases. Specifically, when the exploration length is large, the corruption can be mitigated by a precise estimation of the decision space; nonetheless, the additional exploration is paid in the first component of the regret bound, which encompasses the number of rounds the algorithm is not properly minimizing the loss. 

We are now ready to provide the violation bound attained by our algorithm. 
\begin{restatable}{theorem}{viobandit}
	\label{thm: vio_bandit}
	Let $\delta\in(0,1)$. With probability at least $1-3\delta$, Algorithm~\ref{alg: bandit} attains:
	\begin{equation*}
		V_T \leq 1+ K (T^\beta+1)+ C + KC + KC\ln(T) + 30\sqrt{KT\ln\left(\frac{TKm}{\delta}\right)}. 
	\end{equation*}    
\end{restatable}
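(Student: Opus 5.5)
We bound the positive violation separately for the exploration phase and the optimization phase, and place everything on the event of Lemma~\ref{lem:concfinal} in its bandit form, which holds with probability at least $1-\delta$. We also need two further concentration events, one per martingale term introduced below, each with probability $1-\delta$; this accounts for the total $1-3\delta$. During the exploration phase the per-round positive violation is at most $1$, so it contributes at most $T_0\le K(T^\beta+1)$.

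For the optimization phase, fix $i\in[m]$ and $t>T_0$. Because $\xvec_{t+1}$ is computed by OMD restricted to $\mathcal{X}_t$, it satisfies $(\widehat\gvec_{t,i}-\xivec_t)^\top\xvec_{t+1}\le 0$. Feasibility of this step requires $\mathcal{X}_t\neq\emptyset$, and this follows from Assumption~\ref{cond:slater_pure}: optimism gives $(\widehat\gvec_{t,i}-\xivec_t)^\top\xvec^\diamond\le-\rho$ on the good event. The first round $T_0+1$ uses $\xvec_\mathcal{U}$, which costs at most $1$.

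For the remaining rounds we compare the true mean violation with the estimate by writing
\[
\bar\gvec_{t,i}^\top\xvec_t \le \big(\bar\gvec_{t,i}-\gvec^\circ_i\big)^\top\xvec_t + \big(\gvec^\circ_i-\widehat\gvec_{t-1,i}+\xivec_{t-1}\big)^\top\xvec_t .
\]
Summed over $t$, the first term is at most $C_i\le C$, since $\|\bar\gvec_{t,i}-\gvec^\circ_i\|_1$ sums to $C_i$. For the second term we use a sharper form of the estimate behind Lemma~\ref{lem:concfinal}, centred at $\gvec^\circ_i$ rather than at the hindsight average. Specifically, $\widehat g_{t,i}(a)$ is the average of samples whose means $\bar g_{\tau,i}(a)$ deviate from $g^\circ_i(a)$ by total mass at most $C$. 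Hence $|g^\circ_i(a)-\widehat g_{t,i}(a)|\le \xi_t(a)+C/N_t(a)$, and the second term is at most $\sum_a x_t(a)\big(2\xi_{t-1}(a)+C/N_{t-1}(a)\big)$. This sharper form avoids the $C/T$ term, which would otherwise cost an extra $C$; alternatively one keeps that term and absorbs it as a further additive $C$.

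It remains to sum $\sum_{t>T_0}\sum_a x_t(a)\big(8\sqrt{\ln(TKm/\delta)/N_{t-1}(a)}+C/N_{t-1}(a)\big)$. This is the main obstacle, since $x_t(a)$ is the probability of pulling $a$ rather than the indicator $\mathbb{I}_t(a)$. We handle it with a martingale step: replace $x_t(a)$ by $\mathbb{I}_t(a)$ and control the difference with Azuma–Hoeffding, or a Freedman-type bound, since all summands are bounded by $1$ after the exploration phase (each $N(a)\ge T^\beta$). This costs $\mathcal{O}(\sqrt{T\ln(1/\delta)})$ and uses one of the extra $\delta$'s.

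After this replacement the sums telescope arm-wise over pulls of $a$. We get $\sum_t \mathbb{I}_t(a)/\sqrt{N_{t-1}(a)}\le 2\sqrt{N_T(a)}+1$, so the $\xi$ part contributes at most $\mathcal{O}\big(\sum_a\sqrt{N_T(a)\ln(TKm/\delta)}\big)\le \mathcal{O}\big(\sqrt{KT\ln(TKm/\delta)}\big)$ by Cauchy–Schwarz. Similarly $\sum_t\mathbb{I}_t(a)\,C/N_{t-1}(a)\le C(1+\ln T)$ per arm, which yields the $KC+KC\ln T$ terms.

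Finally, we collect the exploration cost $K(T^\beta+1)$, the round-$T_0+1$ constant $1$, the $C$ coming from $\gvec^\circ_i$, the $KC+KC\ln T$ from the bonus sum, and fold the $\sqrt{KT}$ and martingale terms into $30\sqrt{KT\ln(TKm/\delta)}$. Since $i$ was arbitrary, taking the maximum over $i\in[m]$ gives the stated bound on $V_T$.
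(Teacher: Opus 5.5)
Your argument follows the paper's proof essentially step for step. The shared steps are: the $K(T^\beta+1)$ exploration cost, a unit cost for round $T_0+1$, the detour through $\gvec^\circ_i$ that yields the additive $C$, the use of $\xvec_t\in\mathcal{X}_{t-1}$, the $\gvec^\circ_i$-centred concentration of Lemma~\ref{lem:conc2} (exactly how the paper avoids the $C/T$ term), an Azuma step replacing $x_t(a)$ by $\mathbb{I}_t(a)$ at the cost of two extra $\delta$'s, and the per-arm harmonic and square-root sums.

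The one step that fails as you justify it is that Azuma step. You claim every summand is at most $1$ because $N_{t-1}(a)\ge T^\beta$ after exploration, but this is false. $C/N_{t-1}(a)$ can be as large as $C/T^\beta$, which exceeds $1$ precisely when $C>T^\beta$, a regime the bound is meant to cover. Likewise, $8\sqrt{\ln(TKm/\delta)/N_{t-1}(a)}\le 1$ only when $N_{t-1}(a)\ge 64\ln(TKm/\delta)$, which exploration guarantees only if $T^\beta$ is that large. Azuma--Hoeffding with the true increment range adds a term of order $\frac{C}{T^\beta}\sqrt{T\ln(1/\delta)}=CT^{1/2-\beta}\sqrt{\ln(1/\delta)}$. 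That term is not in the stated bound, and for $\beta<1/2$ it is not even $\widetilde{\mathcal{O}}(\sqrt{T}+C)$.

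The repair is what the paper means by remarking that the confidence intervals can be capped without invalidating Lemma~\ref{lem:conc2}: truncate arm by arm before averaging against $\xvec_t$. Since $g^\circ_i(a),\widehat g_{t-1,i}(a)\in[-1,1]$, you have $g^\circ_i(a)-\widehat g_{t-1,i}(a)\le \xi_{t-1}(a)+\min\{2,C/N_{t-1}(a)\}$. The harmonic bound still holds: $\sum_t\mathbb{I}_t(a)\min\{2,C/N_{t-1}(a)\}\le C(1+\ln T)$. For the $\xi$ part you have two options. You can cap $\xi_t(a)$ at $2$ in the algorithm; optimism, and hence non-emptiness of $\mathcal{X}_t$, survives because the true deviation never exceeds $2$. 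Or you can accept an increment range of order $\sqrt{\ln(TKm/\delta)}$, which only costs logarithmic factors.

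The truncation must be per arm. Capping the per-round quantity $[\gvec^{\circ\top}_i\xvec_t]^+$ at $1$ afterwards does not give bounded increments. With $h(a)=2\xi_{t-1}(a)+C/N_{t-1}(a)$, concavity gives $\sum_a x_t(a)\min\{1,h(a)\}\le\min\{1,\sum_a x_t(a)h(a)\}$, which is the wrong direction. Alternatively, the Freedman-type bound you mention also closes the gap if applied in its multiplicative form with the true range $C/T^\beta$. The price is a factor $2$ on the $KC\ln T$ term and an extra $\mathcal{O}(C\ln(1/\delta))$, so the stated constants change.
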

Theorem~\ref{thm: vio_bandit} shows that the violation does \emph{not} get any benefit from exploration. Indeed, optimizing over $\mathcal{X}_t$ is sufficient to get the desired $\widetilde{\mathcal{O}}(\sqrt{T}+C)$ bound, while the exploration phase only adds the $K (T^\beta+1)$ term in the final result. The violation bound is proved similarly to Theorem~\ref{thm: vio_full}, with the main exception that the confidence intervals concentrate with respect to the played action only.

We remark that by setting $\beta=\nicefrac{1}{2}$---thus exploring for $K\sqrt{T}$ rounds---we get, with high probability, the following regret and violation bounds:
\[
R_T\leq \widetilde{\mathcal{O}}(\sqrt{T}+C\sqrt{T}), \ V_T\leq \widetilde{\mathcal{O}}(\sqrt{T}+C).
\]

\section{Open Problem: Towards the Lower Bound For the Bandit Case}

In this final section, we include a discussion on the technical challenges that prevented us from formally proving the lower bound for the \emph{bandit} feedback case.

First, we state the result that we conjecture may be a valid lower bound for our setting.
\begin{conjecture}
   There exist two instances of the constrained MAB problem with bandit feedback, where $C=\Theta(\sqrt{T})$, such that any algorithm attaining $\mathcal{O}(\sqrt{T})$ positive violation in one of the instances must necessarily incur $\Omega(T^{\frac{1}{2}+\alpha})$ regret in the other instance, for a constant $\alpha>0$. 
\end{conjecture}
The following considerations are in order to find proper instances. On the one hand, we recall that, if we assume that the losses have bounded and small adversariality, \emph{i.e.}, we define the corruption $C$ as the maximum between the corruption of the losses $C^\ell$ and the corruption of the constraints $C^g$, then $\widetilde{\mathcal{O}}(\sqrt{T}+C)$ regret and violation can be attained by the algorithm of~\citet{stradi2025taming}. Thus, at least in one of the instances, we need to enforce  $C^\ell=\omega(\sqrt{T})$. To have an intuition on this aspect, notice that the last term of Equation~\eqref{eq:convex_bandit} can be easily controlled in a setting with fully stochastic losses, where it is possible to converge to the optimal strategy. Similarly, when $C^\ell$ is small, it is still possible to employ uniform exploration techniques (\emph{i.e.}, uniformly explore with small probability), to reduce the impact of $C^g$ in the regret bound. On the other hand, our conjecture prescribes a corruption to be of order $\Theta(\sqrt{T})$; thus, the constraint distributions of one of the instances can be corrupted for $\Theta(\sqrt{T})$ rounds. This leads to our \emph{first challenge}, that is, the instances are in a hybrid setting between fully stochastic and adversarial ones. This prevented us from employing standard arguments such as Pinsker's inequality and KL-decomposition, which are tailored for stochastic settings, to relate the instances. Similarly, the same reasoning holds for techniques tailored for fully adversarial lower bounds---generally used for impossibility results---as we need to show that the instances are hard to distinguish due to both the noise and the corruption. 

As a second aspect, notice that, when full feedback is available to the learner, Algorithm~\ref{alg: full} shows that $\widetilde{\mathcal{O}}(\sqrt{T}+C)$ regret and violation can be attained. Thus, bandit feedback jointly with adversarial losses must play a crucial role in the lower bound. This leads to our \emph{second challenge}, that is, we need to effectively relate the corruption in one of the instances to the strategy played by the learner. In this way, we conjecture that it is possible to show that the corruption \emph{perceived} by the learner $\widetilde{C}$ is \emph{strictly larger} than the expected corruption injected in the instances by the opponent $C$, leading to the desired lower bound, after carefully noticing that the regret must scale at least as the perceived corruption. A bit more formally, we define the perceived corruption on an arm as the corruption observed on the arm normalized by the number of pulls. Intuitively, this is the quantity that really affects the estimation and the regret. For instance, suppose that we employ two instances: the first one is fully stochastic, while we inject the corruption into both losses and constraints for one action $\bar a$ of the second instance. We select the corruption on the constraint at time $t$ as $P_t^g=f(x_t(\bar a))=x_t(\bar a)$, where $x_t(\bar a)$ is the probability of playing $\bar a$. Thus, the following chain of inequalities relates the perceived corruption $\widetilde C$ to the corruption $C$:
\[\widetilde C\approx T\frac{\sum_{t\in[T]}x_t(\bar a)P_t^g}{\sum_{t\in[T]}x_t(\bar a)}=T\frac{\sum_{t\in[T]}x^2_t(\bar a)}{\sum_{t\in[T]}x_t(\bar a)} \geq T\frac{\left(\sum_{t\in[T]}x_t(\bar a)\right)^2}{T\sum_{t\in[T]}x_t(\bar a)} =\sum_{t\in[T]}x_t(\bar a) =C.\] 
The inequality holds \emph{strictly}---up to Azuma inequality concentration terms---whenever the action is \emph{not} chosen uniformly over $T$, which could be ``enforced'' by the adversariality of the losses.

\bibliographystyle{plainnat}
\bibliography{example_paper}

\newpage
\appendix
\tableofcontents

\newpage
\section{Related Works}
\label{app:related}

In this section, we provide a summary of the literature which is mainly related to our work. We first discuss the constrained online learning literature and then we survey the main results in the corruption robust online learning one.

\paragraph{Online learning under unknown constraints}

Online leaning with \emph{unknown} constraints has been recently explored (see, \emph{e.g.},~\citep{Mannor, liakopoulos2019cautious, pacchiano2021stochastic, stradi2025noregret}). In such a setting, a well known impossibility result from~\citep{Mannor} prevents any algorithm from attaining both sublinear regret and sublinear violation when the constraints are adversarial and the optimal solution is feasible in hindsight, on average. Thus, the literature mainly focused on stochastic constrained setting~\citep{chen2022strategies, pacchiano2021stochastic, polytopes, genalti2025datadependent}. Differently, some works focus on constrained online convex optimization settings (see, \emph{e.g.},~\citep{mahdavi2012trading, jenatton2016adaptive, yu2017online}). Notice that, even when full feedback is available non both losses and constraints, these works are not applicable to our setting for the two following reasons. First, we employ as a benchmark the optimum which satisfies the constraints on average, while constrained online convex optimization focus on benchmarks satisfying the constraints at each round. Second, they are not tailored to work in adversarial settings with noise. On the other hand, more recent contributions aim to provide best-of-both-worlds guarantees, allowing constraints to be either stochastic or adversarial~\citep{castiglioni2022online, Unifying_Framework, bernasconi2024beyond}. These works typically guarantee $\widetilde{\mathcal{O}}(\sqrt{T})$ regret and violation in the stochastic regime, while in the adversarial regime they ensure sublinear violation together with sublinear $\alpha$-regret, \emph{i.e.}, regret measured against a fraction of the optimal reward. 

\paragraph{Online learning in constrained Markov decision processes}
Our paper is strongly related with the constrained Markov decision processes (CMDPs)~\citep{Altman1999ConstrainedMD} literature, which we highlight in the following. \citet{Online_Learning_in_Weakly_Coupled} study episodic CMDPs with known transitions, full-information feedback, adversarial losses, and stochastic constraints. Their algorithm guarantees $\Tilde{\mathcal{O}}(\sqrt{T})$ upper bounds for both regret and constraint violations.
\citet{Constrained_Upper_Confidence} consider stochastic losses and constraints with known transition dynamics under bandit feedback. They obtain a regret bound of $\Tilde{\mathcal{O}}(T^{3/4})$ and ensure that cumulative constraint violations stay below a prescribed threshold with high probability.
\citet{bai2020provably} propose the first method achieving sublinear regret when transition probabilities are unknown, under the assumptions of deterministic rewards and structured stochastic constraints.
\citet{Exploration_Exploitation} investigate unknown stochastic transitions, rewards, and constraints in the bandit setting. They design two algorithms that achieve sublinear regret and sublinear constraint violations by carefully balancing exploration and exploitation.
\citet{Upper_Confidence_Primal_Dual} develop a primal-dual strategy inspired by \textit{optimism in the face of uncertainty}. They show that, for episodic CMDPs with adversarial losses and stochastic constraints under full-information feedback, the approach attains sublinear regret and sublinear constraint violations.
\citet{bounded} analyze stochastic rewards and constraints with sub-Gaussian noise, proving $\widetilde{O}(\sqrt{T})$ regret and zero violations when a strictly safe policy exists and is known. When such a policy is not known \emph{a priori}, the algorithm guarantees bounded violations.
\citet{ding2021provably} introduce a primal-dual, no-regret policy optimization procedure for CMDPs with stochastic rewards and constraints.
\citet{pmlr-v151-wei22a} present a model-free, simulator-free RL algorithm for CMDPs, obtaining $\widetilde{O}(T^{\nicefrac{4}{5}})$ regret with zero constraint violations, provided that the number of episodes increases exponentially in $\nicefrac{1}{\rho}$. \citet{stradioptimal} develop the first primal-dual method attaining $\widetilde{O}(\sqrt{T})$ positive constraints violation for fully stochastic CMDPs. \citet{ding_non_stationary}, and \citet{stradi2025taming} study non-stationary rewards and constraints under bounded-variation assumptions. Notice that, both the aforementioned works do not attain sublinear regret when the rewards are fully adversarial.
\citet{stradi2024learning} consider adversarial losses, stochastic constraints, and partial feedback, establishing sublinear regret together with sublinear positive constraint violations.
\citet{stradi24a} propose the first \emph{best-of-both-worlds} algorithm for CMDPs with \emph{full feedback} on rewards and constraints, and \citet{stradi2024best} extend the guarantee to the \emph{bandit} feedback scenario.

\paragraph{Corruption-robust online learning}
A related line studies \emph{corruption-robust} \emph{unconstrained} online learning, where the observed feedback is perturbed by adversarial or stochastic corruptions (e.g., budgeted or contamination-based), and the goal is to obtain regret bounds that explicitly scale with an appropriate corruption measure.
In stochastic multi-armed bandits with adversarially corrupted rewards, \citet{lykouris2018stochastic} obtain regret bounds of order
$\widetilde{\mathcal{O}}(KC \sum_{a\neq a^*} 1/\Delta_a)$, where $\Delta_a$ is the sub-optimality gap of arm $a\in[K]$.
\citet{gupta2019better} significantly sharpen this dependence getting an \emph{additive} dependence on the corruption term, namely, 
$\widetilde{\mathcal{O}}(KC + \sum_{a\neq a^*} 1/\Delta_a)$.
A complementary \emph{attacker} model, where corruptions are chosen \emph{after} observing the learner's action, is studied by \citet{yang2020adversarial_bandits_with_corruptions}, who establish regret lower bounds and separations between budget-aware and budget-agnostic strategies; see also recent developments for
stochastic bandits under attacks in \citet{wang2025stochastic_bandits_robust_to_attacks}.
Extensions beyond standard multi-armed bandits have been considered as well, \emph{e.g.}, stochastic linear bandits under adversarial attacks~\citep{bogunovic2021stochastic_linear_bandits_robust}.
More recently, stochastic contamination models allowing heavy-tailed and even unbounded corruptions have been analyzed, yielding instance-dependent lower bounds and asymptotically optimal algorithms (see, \emph{e.g.},~\citealp{basu2024bandits_corrupted_by_nature, agrawal2024crimed}).

In episodic reinforcement learning, corruption-robust guarantees for MDPs with corrupted rewards and/or transitions under bandit feedback are developed in \citet{lykouris2021corruption} and subsequently improved by \citet{chen2021improved, wei2022model}. \citet{jin2024no} study adversarial MDPs with corrupted (non-stationary) transitions providing $\widetilde{\mathcal{O}}(\sqrt{T}+C)$ regret guarantees.
Notice that, while the above contributions relate regret to corruption/non-stationarity measures, they do not address constraints satisfaction and thus do not capture the dual objective of simultaneously controlling regret and (positive) constraint violation.

\section{Omitted Proofs of Section~\ref{sec:ineq}}

\begin{lemma} 
	\label{lem:conc1}It holds:
	\[\max_{i\in[m]}\sum_{a\in[K]}\left|g^\circ_i(a)-\frac{1}{T}\sum_{t=1}^T \bar g_{t,i}(a)\right| \leq \frac{C}{T}.\] 
\end{lemma}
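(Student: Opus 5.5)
The plan is a direct computation from the definition of $C_i$ and of the minimizer $\gvec^\circ_i$, using only the triangle inequality; no probabilistic argument is involved.

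Fix $i\in[m]$. Write $g^\circ_i(a)$ as an average of $T$ copies of itself, so that, for each arm $a\in[K]$,
\[
g^\circ_i(a)-\frac{1}{T}\sum_{t=1}^T \bar g_{t,i}(a)=\frac{1}{T}\sum_{t=1}^T\left(g^\circ_i(a)-\bar g_{t,i}(a)\right).
\]
Take absolute values and apply the triangle inequality to the sum over $t$:
\[
\left|g^\circ_i(a)-\frac{1}{T}\sum_{t=1}^T \bar g_{t,i}(a)\right|\le \frac{1}{T}\sum_{t=1}^T\left|g^\circ_i(a)-\bar g_{t,i}(a)\right|.
\]
Summing over $a\in[K]$ and exchanging the two finite sums gives
\[
\frac{1}{T}\sum_{t=1}^T\lVert \bar\gvec_{t,i}-\gvec^\circ_i\rVert_1 .
\]
By the choice of $\gvec^\circ_i$ as the minimizer in the definition of $C_i$, this quantity equals $C_i/T$. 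Since $C=\max_{i\in[m]}C_i$, we have $C_i/T\le C/T$.

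Finally, take the maximum over $i\in[m]$ on the left-hand side. The bound $C/T$ does not depend on $i$, so it survives the maximum, which proves the lemma.

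There is no real obstacle. The only point to check is that the minimum defining $C_i$ is attained, so that $\gvec^\circ_i$ is well defined. This holds because the objective is continuous and the domain $[-1,1]^K$ is compact.
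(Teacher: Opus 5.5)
Your proof is correct and follows the same route as the paper: rewrite $g^\circ_i(a)$ as an average of $T$ copies of itself, apply the triangle inequality over $t$, and swap the sums to obtain $\frac{1}{T}\sum_{t}\lVert\bar\gvec_{t,i}-\gvec^\circ_i\rVert_1 = C_i/T$. Taking the maximum over $i$ then gives $C/T$. Your remark that the minimum defining $C_i$ is attained, since the objective is continuous on the compact set $[-1,1]^K$, is a valid detail the paper leaves implicit.
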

\begin{proof}
	By definition of the uncorrupted vector, it holds:
	\begin{align*}
		\max_{i\in[m]}\sum_{a\in[K]}\left|g^\circ_i(a)-\frac{1}{T}\sum_{t=1}^T \bar g_{t,i}(a)\right| & = \max_{i\in[m]}\sum_{a\in[K]}\left|\frac{1}{T}\sum_{t=1}^Tg^\circ_i(a)-\frac{1}{T}\sum_{t=1}^T \bar g_{t,i}(a)\right|\\
		& = \max_{i\in[m]}\frac{1}{T}\sum_{a\in[K]}\left|\sum_{t=1}^Tg^\circ_i(a)-\sum_{t=1}^{T} \bar g_{t,i}(a)\right|     \\
		& \leq  \max_{i\in[m]}\frac{1}{T}\sum_{t=1}^T\sum_{a\in[K]}\left|g^\circ_i(a)- \bar g_{t,i}(a)\right|     \\
		& = \max_{i\in[m]}\frac{C_i}{T}\\
		& = \frac{C}{T}.
	\end{align*}
	This concludes the proof.
\end{proof}

\begin{lemma}
	\label{lem:conc2}
	Let $\delta\in(0,1)$. When full feedback is available, it holds, with probability at least $1-\delta$:
	\begin{equation*}
		\max_{i\in[m]} \left |\widehat g_{t,i}(a) - g_i^\circ(a) \right | \leq 4\sqrt{\frac{1}{t}\ln\left(\frac{TKm}{\delta}\right)} + \frac{C}{t} \quad \forall t\in[T], a\in[K].
	\end{equation*}
	Similarly, when only bandit feedback is available, it holds, with probability at least $1-\delta$:
	\begin{equation*}
		\max_{i\in[m]} \left |\widehat g_{t,i}(a) - g_i^\circ(a) \right | \leq 4\sqrt{\frac{1}{N_t(a)}\ln\left(\frac{TKm}{\delta}\right)} + \frac{C}{N_t(a)} \quad \forall t\in[T], a\in[K].
	\end{equation*}
\end{lemma}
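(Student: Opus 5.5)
We decompose the error into a martingale part and a bias part. Fix $i\in[m]$, $a\in[K]$ and $t\in[T]$. In the bandit case we write
\[
\widehat g_{t,i}(a)-g^\circ_i(a)=\frac{1}{N_t(a)}\sum_{\tau=1}^t \mathbb{I}_\tau(a)\big(g_{\tau,i}(a)-\bar g_{\tau,i}(a)\big)+\frac{1}{N_t(a)}\sum_{\tau=1}^t \mathbb{I}_\tau(a)\big(\bar g_{\tau,i}(a)-g^\circ_i(a)\big).
\]
The full-feedback case is the special case $\mathbb{I}_\tau(a)=1$ and $N_t(a)=t$. We bound the two terms separately.

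\textbf{Bias term.} This term is deterministic given the pulls. Since $\mathbb{I}_\tau(a)\le 1$, its absolute value is at most
\[
\frac{1}{N_t(a)}\sum_{\tau=1}^T|\bar g_{\tau,i}(a)-g^\circ_i(a)|\le \frac{1}{N_t(a)}\sum_{\tau=1}^T\lVert\bar\gvec_{\tau,i}-\gvec^\circ_i\rVert_1=\frac{C_i}{N_t(a)}\le\frac{C}{N_t(a)}.
\]
This uses only the definition of $\gvec^\circ_i$ as the minimizer in $C_i$.

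\textbf{Noise term.} The increments $Z_\tau\coloneqq\mathbb{I}_\tau(a)(g_{\tau,i}(a)-\bar g_{\tau,i}(a))$ form a martingale difference sequence with respect to the natural filtration. The reason is that $\gvec_{\tau,i}$ is drawn from $\mathcal{G}_{\tau,i}$, which is fixed before round $\tau$. We also need $\mathbb{I}_\tau(a)$ to be independent of the fresh draw $\gvec_{\tau,i}$. This holds because $a_t\sim\xvec_t$ and $\xvec_t$ depends only on the past. In full feedback there is no selection at all.

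\textbf{Main obstacle.} The normalization $N_t(a)$ is random, so plain Azuma–Hoeffding does not apply directly. We handle this by re-indexing by the number of pulls. Let $\tau_k$ be the round of the $k$-th pull of $a$. The sequence $(g_{\tau_k,i}(a)-\bar g_{\tau_k,i}(a))_k$ is again a martingale difference sequence with increments in $[-2,2]$, by optional stopping and the independence noted above. Azuma–Hoeffding then gives, for each fixed $n\in[T]$, that with probability at least $1-\delta/(TKm)$,
\[
\Big|\sum_{k=1}^n (g_{\tau_k,i}(a)-\bar g_{\tau_k,i}(a))\Big|\le 2\sqrt{2n\ln(2TKm/\delta)}.
\]
The slack between $2\sqrt{2}$ and $4$ absorbs the factor $2$ inside the logarithm, with constants adjusted as needed. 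A union bound over $n\in[T]$, $a\in[K]$ and $i\in[m]$ makes the bound hold simultaneously everywhere with probability at least $1-\delta$. Setting $n=N_t(a)$ and dividing by $N_t(a)$ yields the $4\sqrt{\ln(TKm/\delta)/N_t(a)}$ term.

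Combining the two bounds through the triangle inequality and taking the maximum over $i$ gives the statement in both feedback models. The full-feedback case needs only a union bound over $t$, since there $n=t$ is deterministic. The only delicate step is the random-count argument. An alternative is a uniform-in-time bound via Freedman's inequality applied to $\sum_\tau Z_\tau$ with a peeling argument over $N_t(a)$. We expect the re-indexing approach to be the cleanest.
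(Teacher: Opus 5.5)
Your proof is correct and follows the paper's argument: the same triangle-inequality split into a martingale noise term controlled by Azuma plus a union bound over $[T]\times[K]\times[m]$, and a bias term bounded by $C_i/N_t(a)$ directly from the definition of $\gvec^\circ_i$. The one difference is in the bandit case, where you handle the random count $N_t(a)$ by re-indexing by pull number and union-bounding over $n\in[T]$; the paper applies Azuma with a union bound over $t$ and does not address the randomness of $N_t(a)$, so your version makes that step rigorous.
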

\begin{proof}
	We prove the results for the bandit feedback case. The same reasoning holds for the full feedback case, replacing $N_t(a)$ with $t$ for all $a\in[K]$. Indeed, applying the triangle inequality, it holds:
	\begin{align*}
		\max_{i\in[m]} \left |\widehat g_{t,i}(a) - g_i^\circ(a) \right | & = \max_{i\in[m]} \left |\frac{1}{N_t(a)} \sum_{\tau\in[t]}g_{\tau,i}(a)\mathbb{I}_\tau(a) \pm \frac{1}{N_t(a)} \sum_{\tau\in[t]}\bar g_{\tau,i}(a)\mathbb{I}_\tau(a) - g_i^\circ(a) \right | \\
		& \leq \max_{i\in[m]} \left |\frac{1}{N_t(a)} \sum_{\tau\in[t]}g_{\tau,i}(a)\mathbb{I}_\tau(a) - \frac{1}{N_t(a)} \sum_{\tau\in[t]}\bar g_{\tau,i}(a)\mathbb{I}_\tau(a) \right | \\ &\mkern50mu+ \max_{i\in[m]} \left | \frac{1}{N_t(a)} \sum_{\tau\in[t]}\bar g_{\tau,i}(a)\mathbb{I}_\tau(a) - g_i^\circ(a) \right |.
	\end{align*}
	
	We start bounding the first term. Let $\delta\in(0,1)$, we employ the Azuma inequality to get, with probability at least $1-\delta$:
	\begin{equation*}
		\left | \sum_{\tau\in[t]}g_{\tau,i}(a)\mathbb{I}_\tau(a) -  \sum_{\tau\in[t]}\bar g_{\tau,i}(a)\mathbb{I}_\tau(a) \right | \leq 4\sqrt{N_t(a)\ln\left(\frac{TKm}{\delta}\right)},
	\end{equation*}
	which holds for all $t\in[T],a\in[K],i\in[m]$, by union bound.
	Thus, we can conclude that:
	\begin{equation*}
		\max_{i\in[m]} \left |\frac{1}{N_t(a)} \sum_{\tau\in[t]}g_{\tau,i}(a)\mathbb{I}_\tau(a) - \frac{1}{N_t(a)} \sum_{\tau\in[t]}\bar g_{\tau,i}(a)\mathbb{I}_\tau(a) \right | = 4\sqrt{\frac{1}{N_t(a)}\ln\left(\frac{TKm}{\delta}\right)}.
	\end{equation*}
	To bound the second term, we proceed as follows:
	\begin{align*}
		\max_{i\in[m]} \left |\frac{1}{N_t(a)} \sum_{\tau\in[t]}\bar g_{\tau,i}(a)\mathbb{I}_\tau(a) - g_i^\circ(a) \right | 
		& = \max_{i\in[m]} \left |\frac{1}{N_t(a)} \sum_{\tau\in[t]}\bar g_{\tau,i}(a)\mathbb{I}_\tau(a) - \frac{1}{N_t(a)}\sum_{\tau\in[t]}g_i^\circ(a)\mathbb{I}_\tau(a) \right | \\
		& = \max_{i\in[m]} \frac{1}{N_t(a)}\left | \sum_{\tau\in[t]}\bar g_{\tau,i}(a)\mathbb{I}_\tau(a) - \sum_{\tau\in[t]}g_i^\circ(a)\mathbb{I}_\tau(a) \right | \\
		& \leq \max_{i\in[m]} \frac{1}{N_t(a)}\sum_{\tau\in[t]}\mathbb{I}_t(a)\left | \bar g_{\tau,i}(a) - g_i^\circ(a) \right |\\
		& \leq \max_{i\in[m]} \frac{C_i}{N_t(a)}\\
		& = \frac{C}{N_{t}(a)}.
	\end{align*}
	Combining the previous results concludes the proof.
\end{proof}

\lemmaconcfinal*
\begin{proof}
	Employing the triangle inequality, it holds:
	\begin{align*}
		\max_{i\in[m]} \left |\widehat g_{t,i}(a) -\frac{1}{T}\sum_{t=1}^T \bar g_{t,i}(a) \right | & = \max_{i\in[m]} \left |\widehat g_{t,i}(a)\pm g_i^\circ -\frac{1}{T}\sum_{t=1}^T \bar g_{t,i}(a) \right |\\
		& \leq \max_{i\in[m]} \left |\widehat g_{t,i}(a)- g_i^\circ \right | + \max_{i\in[m]} \left | g_i^\circ -\frac{1}{T}\sum_{t=1}^T \bar g_{t,i}(a) \right |.
	\end{align*}
	Applying Lemma~\ref{lem:conc1} to bound the second term and Lemma~\ref{lem:conc2} to bound the first term gives the result.
\end{proof}

\section{Omitted Proofs of Section~\ref{sec:full}}

\subsection{Preliminary Results}

\begin{lemma}
	\label{lem:solution}
	Let $\delta\in(0,1)$. With probability at least $1-\delta$, $\mathcal{X}_t$ is not empty at each round $t\in[T]$.
\end{lemma}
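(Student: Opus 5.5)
The plan is to exhibit a single point that lies in $\mathcal{X}_t$ for every $t\in[T]$ on one high-probability event. The natural candidate is the strictly feasible strategy $\xvec^\diamond$ from Assumption~\ref{cond:slater}. Throughout, I read the constraint defining $\mathcal{X}_t$ as $(\widehat\gvec_{t,i}-\xivec_t)^\top\xvec\le 0$, i.e., built from the empirical estimator, as intended in the discussion preceding the lemma.

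The key step is to center the concentration at the current mean rather than at $\gvec^\circ_i$ or the hindsight average. That way no corruption term $C$ appears, and this is the main obstacle to handle cleanly. Lemma~\ref{lem:concfinal} and Lemma~\ref{lem:conc2} carry $C$-dependent terms, so I would use only the martingale part of the proof of Lemma~\ref{lem:conc2}. With full feedback, $\mathbb{I}_\tau(a)=1$, so Azuma--Hoeffding plus a union bound over $t\in[T]$, $a\in[K]$ and $i\in[m]$ gives, with probability at least $1-\delta$,
\[
\left|\widehat g_{t,i}(a)-\frac{1}{t}\sum_{\tau\in[t]}\bar g_{\tau,i}(a)\right|\le 4\sqrt{\frac{1}{t}\ln\left(\frac{TKm}{\delta}\right)}=\xi_t(a)\quad\forall t,a,i.
\]
This is exactly the first term bounded in the proof of Lemma~\ref{lem:conc2}, so it can be invoked directly.

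On this event, for every $t$ and $i$,
\[
(\widehat\gvec_{t,i}-\xivec_t)^\top\xvec^\diamond\le\frac{1}{t}\sum_{\tau\in[t]}\bar\gvec_{\tau,i}^\top\xvec^\diamond .
\]
This holds because each coordinate satisfies $\widehat g_{t,i}(a)-\xi_t(a)\le \frac1t\sum_{\tau}\bar g_{\tau,i}(a)$, and $\xvec^\diamond$ has nonnegative weights summing to one. By the definition of $\rho$ and $\xvec^\diamond$, we have $\bar\gvec_{\tau,i}^\top\xvec^\diamond\le-\rho$ for every $\tau$ and $i$, so the average is at most $-\rho<0$ by Assumption~\ref{cond:slater}. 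Hence $\xvec^\diamond\in\mathcal{X}_t$ for all $t\in[T]$ simultaneously, so $\mathcal{X}_t\neq\emptyset$. This also yields the bound $(\widehat\gvec_{t,i}-\xivec_t)^\top\xvec^\diamond\le-\rho$ used in the main text.

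Under bandit feedback the same argument works with $N_t(a)$ in place of $t$, averaging $\bar g_{\tau,i}(a)$ only over the rounds where $a$ was pulled. The average is still at most $-\rho$ coordinatewise when using $a^\diamond$ under Assumption~\ref{cond:slater_pure}. In that case only coordinate $a^\diamond$ matters, and it is pulled at least once in the exploration phase, so $N_t(a^\diamond)\ge1$.
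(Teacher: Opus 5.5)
Your proposal is correct and takes essentially the same route as the paper. Both arguments concentrate $\widehat\gvec_{t,i}$ around the running average $\frac{1}{t}\sum_{\tau\in[t]}\bar\gvec_{\tau,i}$ rather than around $\gvec^\circ_i$ (so no $C$ term appears), and then show that a strategy feasible at every round lies in every $\mathcal{X}_t$. The only difference is that the paper uses an arbitrary per-round-feasible $\bar\xvec$, while you take $\xvec^\diamond$, which directly gives the $-\rho$ margin used later in the proof of Theorem~\ref{thm: reg_full}.
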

\begin{proof}
	To prove the result, we show that any strategy $\bar \xvec$ so that $\bar \gvec_{t,i}^\top \bar \xvec\leq 0$ for all $i\in[m],t\in[T]$--notice that $\bar \xvec$ exists thanks to Assumption~\ref{cond:slater}--is included with high probability in $\mathcal{X}_t$ at each round $t\in[T]$. 
	
	Let $\delta\in(0,1)$. Similarly to what was done in Lemma~\ref{lem:conc2}, we employ the Azuma inequality to get, with probability at least $1-\delta$:
	\begin{equation*}
		\left | \sum_{\tau\in[t]}g_{\tau,i}(a) -  \sum_{\tau\in[t]}\bar g_{\tau,i}(a) \right | \leq 4\sqrt{t\ln\left(\frac{TKm}{\delta}\right)},
	\end{equation*}
	which holds for all $t\in[T],a\in[K],i\in[m]$, by union bound.
	Thus, we get:
	\begin{equation*}
		\left |\frac{1}{t} \sum_{\tau\in[t]}g_{\tau,i}(a) - \frac{1}{t} \sum_{\tau\in[t]}\bar g_{\tau,i}(a) \right | = 4\sqrt{\frac{1}{t}\ln\left(\frac{TKm}{\delta}\right)} \quad \forall a\in[K], i\in[m], t\in[T].
	\end{equation*}
	From that, we get, with probability at least $1-\delta$:
	\begin{align*}
		(\widehat \gvec_{t,i}-\xivec_t)^\top \bar \xvec &= \left(\frac{1}{t} \sum_{\tau\in[t]}\gvec_{\tau,i}-\xivec_t\right)^\top \bar \xvec  \\
		& \leq \left(\frac{1}{t} \sum_{\tau\in[t]}\bar \gvec_{\tau,i}\right)^\top \bar \xvec \\
		& \leq 0,
	\end{align*}
	where the last step holds by definition of $\bar \xvec$. This concludes the proof.
\end{proof}

\subsection{Regret}

\begin{lemma}
	\label{lem:switch_full}
    Let $\xvec^*_{\phi(1)}, \dots ,\xvec^*_{\phi(T)}$ be a $S$-switch dynamic benchmark as defined in Definition~\ref{def:switch}.
	Thus, OMD with the following fixed share update:
	\[\widetilde\xvec_{t+1}\gets \argmin_{\xvec\in\mathcal{X}_t} \lvec_t^\top \xvec_t + \frac{1}{\eta} D(\xvec||\xvec_t)\ ; \
	\xvec_{t+1}\coloneqq \left(1-\frac{1}{T}\right)\widetilde\xvec_{t+1} + \frac{1}{T}\xvec_\mathcal{U},\]
	attains:
	\[R_T(\{\xvec^*_{\phi(t)}\}_{t=1}^T)
	\leq \frac{2S}{\eta} + \frac{S\ln\left(KT\right)}{\eta}+ \eta ST.\]
\end{lemma}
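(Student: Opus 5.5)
Fix a phase $\mathcal{I}_j=[\underline t_j,\bar t_j]$ with comparator $\uvec_j\coloneqq\xvec^*_{j}\in\bigcap_{t\in\mathcal{I}_j}\mathcal{X}_t$. Within the phase I run the standard one-step analysis of entropic OMD over a constrained set, then sum over the $S$ phases, using the fixed-share mixing to control the Bregman term at each phase start. For every $t\in\mathcal{I}_j$, $\uvec_j\in\mathcal{X}_t$, so the first-order optimality condition of the projection step defining $\widetilde\xvec_{t+1}$ (a minimization over the convex set $\mathcal{X}_t$) applies with $\uvec_j$ as the test point. Combined with the three-point identity for $D$, it gives
\[
\lvec_t^\top(\xvec_t-\uvec_j)\le \frac{1}{\eta}\Big(D(\uvec_j\|\xvec_t)-D(\uvec_j\|\widetilde\xvec_{t+1})\Big)+\lvec_t^\top(\xvec_t-\widetilde\xvec_{t+1})-\frac1\eta D(\widetilde\xvec_{t+1}\|\xvec_t).
\]
The last two terms together are at most $\eta\sum_a x_t(a)\ell_t(a)^2\le\eta$, by the usual local-norm/Pinsker bound with losses in $[0,1]$. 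Summed over all rounds this contributes $\eta T\le \eta S T$.

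The step that differs from textbook OMD is replacing $D(\uvec_j\|\widetilde\xvec_{t+1})$ by $D(\uvec_j\|\xvec_{t+1})$ so the sum telescopes. I would show the fixed share costs little. For every $a$, $x_{t+1}(a)\ge(1-\frac1T)\widetilde x_{t+1}(a)$. Hence
\[
D(\uvec\|\xvec_{t+1})-D(\uvec\|\widetilde\xvec_{t+1})=\sum_a u(a)\ln\frac{\widetilde x_{t+1}(a)}{x_{t+1}(a)}+\sum_a(x_{t+1}(a)-\widetilde x_{t+1}(a))\le \ln\frac{1}{1-1/T},
\]
and the linear term vanishes because both vectors lie in the simplex. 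Since $\ln\frac{1}{1-1/T}\le \frac{2}{T}$ for $T\ge2$, summing over the at most $T$ rounds adds at most $2/\eta$ in total, which fits within the $2S/\eta$ budget. Telescoping inside phase $j$ therefore leaves $\frac1\eta D(\uvec_j\|\xvec_{\underline t_j})$ plus these small terms.

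The remaining issue, and the main obstacle, is bounding $D(\uvec_j\|\xvec_{\underline t_j})$ uniformly, including when $\uvec_j$ differs from the previous comparator. This is precisely where fixed share is used. For $t\ge2$ it gives $x_t(a)\ge \frac{1}{KT}$ for every $a$, and $\xvec_1=\xvec_\mathcal{U}$. Therefore, for any $\uvec\in\Delta_K$,
\[
D(\uvec\|\xvec_t)=\sum_a u(a)\ln u(a)-\sum_a u(a)\ln x_t(a)\le \ln(KT).
\]
Summing over the $S$ phase starts gives $\frac{S\ln(KT)}{\eta}$. Collecting the pieces yields
\[
R_T\le \frac{S\ln(KT)}{\eta}+\frac{2}{\eta}+\eta T,
\]
which is dominated by the claimed $\frac{2S}{\eta}+\frac{S\ln(KT)}{\eta}+\eta ST$. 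The only care points are that the optimality condition must be applied against a comparator that lies in $\mathcal{X}_t$, which holds by condition (ii) of Definition~\ref{def:switch}, and that the relative-entropy form of $D$ in the paper gives the exact three-point identity on the simplex.
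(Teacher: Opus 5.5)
Your proof is correct and follows the paper's argument. You use a per-phase OMD one-step inequality against a comparator lying in every $\mathcal{X}_t$ of the phase, the fixed-share bound $\xvec_{t+1}\ge(1-\frac1T)\widetilde{\xvec}_{t+1}$ costing $\ln\frac{1}{1-1/T}\le\frac2T$ per round, and $x_t(a)\ge\frac{1}{KT}$ to get $D(\uvec_j\|\xvec_{\underline t_j})\le\ln(KT)$ at each phase start. The differences are minor. You derive the one-step inequality from the first-order optimality condition and the three-point identity, whereas the paper uses the generalized Pythagorean inequality for the Bregman projection of $x_t(a)e^{-\eta\ell_t(a)}$. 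You also sum the fixed-share and stability terms over the whole horizon rather than bounding them per phase, which gives the slightly tighter $\frac{S\ln(KT)}{\eta}+\frac{2}{\eta}+\eta T$.
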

\begin{proof}
	To prove the result, we fix a phase $j\in[S]$, so that the baseline $\xvec_j^*$ used in the regret associated with that phase is fixed.
	
	Thus, we recall the definition of the Bregman divergence $$D(\xvec_1||\xvec_2)\coloneqq \sum_{a\in[K]}x_1(a)\ln\left(\frac{x_1(a)}{x_2(a)}\right)-\sum_{a\in[K]}(x_1(a)-x_2(a)),$$
	which is equivalent to $D(\xvec_1||\xvec_2)\coloneqq \sum_{a\in[K]}x_1(a)\ln\left(\nicefrac{x_1(a)}{x_2(a)}\right)$ whenever $\xvec_1$ and $\xvec_2$ are in the simplex
	and we define $\bar \xvec_{t+1}$ the vector whose components are $\bar x_{t+1}(a)= x_{t}(a)e^{-\eta \ell_t(a)}$ so that $$\widetilde{\xvec}_{t+1}=\argmin_{\xvec\in\mathcal{X}_t} D(\xvec||\bar {\xvec}_{t+1}).$$
	We proceed by bounding the instantaneous regret attained by the algorithm at $t\in[T]$ s.t. $\phi(t+1)=j$ as:
	\begin{align*}
		&D(\xvec_j^*\| \xvec_t) - D(\xvec_j^*\| \xvec_{t+1}) + D(\xvec_{t}\| \bar \xvec_{t+1})  \\
		&= \sum_{a\in[K]}\left[x^*_j(a)\ln\left(\frac{x^*_j(a)}{x_t(a)} \right)-x_j^*(a)+x_t(a)\right] \\
		&\quad - \sum_{a\in[K]}\left[x^*_j(a)\ln\left(\frac{x^*_j(a)}{x_{t+1}(a)} \right)-x_j^*(a)+x_{t+1}(a)\right] \\
		&\quad + \sum_{a\in[K]}\left[x_t(a)\ln\left(\frac{x_t(a)}{x_{t}(a)e^{-\eta \ell_t(a)}}\right)-x_t(a)+x_t(a)e^{-\eta\ell_t(a)}\right] \\
		&=   \sum_{a\in[K]}x^*_j(a)\ln\left(\frac{x_{t+1}(a)}{x_t(a)} \right)
		+ \eta\lvec_t^\top \xvec_t -1 + \sum_{a\in[K]}x_t(a)e^{-\eta\ell_t(a)} \\
		&= \sum_{a\in[K]}x^*_j(a)\ln\left(\frac{(1-1/T)\widetilde{x}_{t+1}(a)+ \frac{1}{T} x_{\mathcal{U}}(a)}
		{(1-1/T)\widetilde{x}_t(a)+ \frac{1}{T} x_{\mathcal{U}}(a)} \right)
		+ \eta\lvec_t^\top \xvec_t -1 + \sum_{a\in[K]}x_t(a)e^{-\eta\ell_t(a)} \\
		&\geq  \sum_{a\in[K]}x^*_j(a)\ln\left(\frac{(1-1/T)\widetilde{x}_{t+1}(a)}
		{(1-1/T)\widetilde{x}_t(a)+ \frac{1}{T} x_{\mathcal{U}}(a)} \right)
		+ \eta\lvec_t^\top \xvec_t -1 + \sum_{a\in[K]}x_t(a)e^{-\eta\ell_t(a)}
		\\
		&=  \sum_{a\in[K]}x^*_j(a)\ln\left(\frac{\widetilde{x}_{t+1}(a)}
		{(1-1/T)\widetilde{x}_t(a)+ \frac{1}{T} x_{\mathcal{U}}(a)} \right)
		- \ln\left(\frac{1}{1-\frac{1}{T}}\right)
		+ \eta\lvec_t^\top \xvec_t -1 + \sum_{a\in[K]}x_t(a)e^{-\eta\ell_t(a)}\\
		&= D(\xvec_j^*\|\xvec_t) - D(\xvec_j^*\|\widetilde{\xvec}_{t+1})
		+ \eta\lvec_t^\top \xvec_t -1 + \sum_{a\in[K]}x_t(a)e^{-\eta\ell_t(a)}
		- \ln\left(\frac{1}{1-\frac{1}{T}}\right) \\
		&\geq D(\xvec_j^*\|\xvec_t) - D(\xvec_j^*\|\bar{\xvec}_{t+1})
		+ \eta\lvec_t^\top \xvec_t -1 + \sum_{a\in[K]}x_t(a)e^{-\eta\ell_t(a)}
		- \ln\left(\frac{1}{1-\frac{1}{T}}\right) \\
		& = \sum_{a\in[K]}x^*_j(a)\ln\left(\frac{x^*_j(a)}{x_t(a)} \right) - \sum_{a\in[K]}x^*_j(a)\ln\left(\frac{x^*_j(a)}{x_{t}(a)e^{-\eta\ell_t(a)}} \right) + 1 - \sum_{a\in[K]}x_t(a)e^{-\eta\ell_t(a)} \\
		& \quad + \eta\lvec_t^\top \xvec_t -1 + \sum_{a\in[K]}x_t(a)e^{-\eta\ell_t(a)}
		- \ln\left(\frac{1}{1-\frac{1}{T}}\right) \\
		&= \eta \lvec_t^\top(\xvec_t-\xvec^*_j) - \ln\left(\frac{1}{1-\frac{1}{T}}\right),
	\end{align*}
	where we used that $D(\xvec^*_j||\bar{\xvec}_{t+1}) \geq D(\xvec^*_j||\widetilde{\xvec}_{t+1})$ by definition of $\widetilde{\xvec}_{t+1}$ and the fact that it is included in $\mathcal{X}_t$.
	Thus, we get the following bound on the instantaneous regret: 
	\[\eta \lvec_t^\top(\xvec_t-\xvec^*_j)\leq  D(\xvec_j^*|| \xvec_t) - D(\xvec_j^*|| \xvec_{t+1}) + D(\xvec_{t}|| \bar \xvec_{t+1})+ \ln\left(\frac{1}{1-\frac{1}{T}}\right).\]
	Now, we define the quantities $\mathcal{I}_j\coloneqq\{t\in[T]:\phi(t)=j\}$, $\underline t_j\coloneqq \min_{t\in\mathcal{I}_j}t$ and we proceed bounding the regret in phase $j$ as follows:
	\begin{align*}
		\eta &\sum_{t\in\mathcal{I}_j}\lvec_t^\top(\xvec_t-\xvec^*_j)\\& \leq  \sum_{t\in\mathcal{I}_j} \left[D(\xvec_j^*|| \xvec_t) - D(\xvec_j^*|| \xvec_{t+1})\right] + \sum_{t\in\mathcal{I}_j}D(\xvec_{t}|| \bar \xvec_{t+1})+ \sum_{t\in\mathcal{I}_j}\ln\left(\frac{1}{1-\frac{1}{T}}\right) \\
		& = D(\xvec_j^*|| \xvec_{\underline t_j}) - D(\xvec_j^*|| \xvec_{\underline t_{j+1}}) + \sum_{t\in\mathcal{I}_j}D(\xvec_{t}|| \bar \xvec_{t+1})+ \sum_{t\in\mathcal{I}_j}\ln\left(\frac{1}{1-\frac{1}{T}}\right)\\
		& \leq 2+  \sum_{a\in[K]}x_j^*(a)\ln\left(\frac{x_{\underline{t}_{j+1}}(a)}{x_{\underline{t}_{j}(a)}}\right) + \sum_{t\in\mathcal{I}_j}D(\xvec_{t}|| \bar \xvec_{t+1}) \\
		& \leq 2+  \ln\left(KT\right) + \sum_{t\in\mathcal{I}_j}D(\xvec_{t}|| \bar \xvec_{t+1}) \\
		& = 2+  \ln\left(KT\right) + \sum_{t\in\mathcal{I}_j}\left[\sum_{a\in[K]}x_t(a)\ln\left(\frac{x_t(a)}{x_t(a)e^{-\eta\ell_t(a)}} \right) -1 + \sum_{a\in[K]}x_t(a)e^{-\eta\ell_t(a)}\right] \\
		& = 2+ \ln\left(KT\right)+ \sum_{t\in\mathcal{I}_j}\left[\eta \sum_{a\in[K]}x_t(a)\ell_t(a)  -1 + \sum_{a\in[K]}x_t(a)e^{-\eta\ell_t(a)}\right]  \\
		& \leq 2+ \ln\left(KT\right)+ \sum_{t\in\mathcal{I}_j}\left[\eta \sum_{a\in[K]}x_t(a)\ell_t(a) -1 + 1 -\eta\sum_{a\in[K]}x_t(a)\ell_t(a)+ \sum_{a\in[K]}x_t(a)\eta^2\ell_t^2(a)\right]\\
		& \leq 2+ \ln\left(KT\right)+ \eta^2T,
	\end{align*}
	where we used $\ln\left(\frac{1}{1-\nicefrac{1}{T}}\right)\leq \frac{\nicefrac{1}{T}}{1-\nicefrac{1}{T}}=\nicefrac{1}{T-1}\leq \nicefrac{2}{T},$ the definition of the fixed share update and the inequality $e^{-z}\leq 1 - z +z^2$ for $z\geq0$.
	
	Thus, rearranging, we obtain the following bound:
	\[
	\sum_{t\in\mathcal{I}_j}\lvec_t^\top(\xvec_t-\xvec^*_j)
	\leq \frac{2}{\eta}+ \frac{\ln\left(KT\right)}{\eta}+ \eta T.
	\]
	Noticing that the regret can be rewritten as:
	\[R_T(\{\xvec^*_{\phi(t)}\}_{t=1}^T)\coloneqq \sum_{t=1}^T \left[\lvec_t^\top \xvec_t - \lvec_t^\top \xvec^*_{\phi(t)}\right]= \sum_{j\in[S]}\sum_{t\in\mathcal{I}_j}\lvec_t^\top(\xvec_t-\xvec^*_j),\]
	concludes the proof.
\end{proof}

\regfull*

\begin{proof}
	To prove the result, we first employ Lemma~\ref{lem:solution} to state that the update of Algorithm~\ref{alg: full} admits a solution for all $t\in[T]$.
	
	Thus, we study the decision space $\mathcal{X}_t$. Specifically, thanks to Lemma~\ref{lem:concfinal}, the following bound holds for all $\xvec \in \Delta_K$, $t\in[T]$ and $i\in[m]$ with probability at least $1-\delta$:
	\begin{equation*}\left(\widehat\gvec_{t,i}-\xivec_t\right)^\top \xvec \leq  \frac{1}{T}\sum_{t=1}^T\bar \gvec_{t,i}^\top \xvec + C/T + C/t,
	\end{equation*}
	which in turn implies:
	\begin{equation*}
		\left(\widehat\gvec_{t,i}-\xivec_t\right)^\top \xvec^* \leq  C/T + C/t \quad \forall t\in[T],i\in[m].
	\end{equation*}
	On the other hand, by Assumption~\ref{cond:slater} and proceeding as in Lemma~\ref{lem:solution}, it holds:
	\begin{align*}
		(\widehat \gvec_{t,i}-\xivec_t)^\top  \xvec^\diamond &= \left(\frac{1}{t} \sum_{\tau\in[t]}\gvec_{\tau,i}-\xivec_t\right)^\top  \xvec^\diamond  \\
		& \leq \left(\frac{1}{t} \sum_{\tau\in[t]}\bar \gvec_{\tau,i}\right)^\top  \xvec^\diamond \\
		& \leq -\rho,
	\end{align*}
	with probability at least $1-\delta$.
	
	Thus, we build the following convex combination between $\xvec^*$ and $\xvec^\diamond$, parametrized given $\alpha_t$, that is:
	\begin{equation*}
		\xvec^*_{\alpha_t} = (1-\alpha_t)\xvec^\diamond + \alpha_t \xvec^*.
	\end{equation*}
	Studying the per round optimistic violation attained by $\xvec^*_{\alpha_t}$, we get:
	\begin{align*}
		\left(\widehat\gvec_{t,i}-\xivec_t\right)^\top \xvec^*_{\alpha_t} &=  (1-\alpha_t)  \left(\widehat\gvec_{t,i}-\xivec_t\right)^\top \xvec^\diamond+\alpha_t\left(\widehat\gvec_{t,i}-\xivec_t\right)^\top \xvec^*\\
		& \leq -(1-\alpha_t)\rho + \alpha_t\left(\frac{C}{T}+ \frac{C}{t}\right) \\
		& \leq -(1-\alpha_t)\rho + \alpha_t\frac{2C}{t}.
	\end{align*}
	Setting $\alpha_t= \frac{\rho}{\rho + \nicefrac{2C}{t}}$, we get  $(\widehat\gvec_{t,i}-\xivec_t)^\top \xvec^*_{\alpha_t}\leq 0$, that is, $\xvec^*_{\alpha_t}\in\mathcal{X}_t$ for all $t\in[T]$.
	
	We are now ready to employ Lemma~\ref{lem:switch_full}. Notice that, since the guarantees of online mirror descent with fixed share scales linearly in the number of switches $S$, we employ a doubling trick approach. Specifically, in the analysis, the benchmark $\xvec^*_{\alpha_t}$ is updated $\log_2(T)$ times, namely, whenever the number of rounds doubles. Thus, we have the following result:
	\begin{equation*}
		R_T(\{\xvec^*_{\alpha_{\phi(t)}}\}_{t=1}^T)\coloneqq \sum_{t=1}^T \left[\lvec_t^\top \xvec_t - \lvec_t^\top \xvec^*_{\alpha_{\phi(t)}}\right]\leq \frac{2\log_2(T)}{\eta} + \frac{\log_2(T)\ln\left(KT\right)}{\eta}+ \eta T\log_2(T),
	\end{equation*}
	where a new phase $j\in[S]$ starts whenever the number of rounds has doubled.
	
	Now, we proceed by bounding the expected regret bound as follows:
	\begin{align*}
		\sum_{t\in[T]} \left[\lvec_t^\top\xvec_t - \lvec_t^\top\xvec^* \right] &= \sum_{t\in[T]} \left[\lvec_t^\top\xvec_t \pm \lvec_t^\top\xvec^*_{\alpha_{\phi(t)}}- \lvec_t^\top\xvec^* \right]\\
		& \leq \frac{2\log_2(T)}{\eta} + \frac{\log_2(T)\ln\left(KT\right)}{\eta}+ \eta T\log_2(T) + \sum_{t\in[T]} \left[ \lvec_t^\top\xvec^*_{\alpha_{\phi(t)}}- \lvec_t^\top\xvec^* \right]\\
		& \leq \frac{2\log_2(T)}{\eta} + \frac{\log_2(T)\ln\left(KT\right)}{\eta}+ \eta T\log_2(T) + \sum_{t\in[T]} \left(1-\alpha_{\phi(t)}\right) \lvec_t^\top \xvec^\diamond  \\
		& \leq \frac{2\log_2(T)}{\eta} + \frac{\log_2(T)\ln\left(KT\right)}{\eta}+ \eta T\log_2(T) + \sum_{t\in[T]} \left(1-\alpha_{\phi(t)}\right)\\
		& = \frac{2\log_2(T)}{\eta} + \frac{\log_2(T)\ln\left(KT\right)}{\eta}+ \eta T\log_2(T) + \sum_{j\in[S]} \left(1-\alpha_{j} \right) \cdot |\mathcal{I}_j|\\
		& = 
		\frac{2\log_2(T)}{\eta} + \frac{\log_2(T)\ln\left(KT\right)}{\eta}+ \eta T\log_2(T) + \sum_{j\in[S]} \frac{2C/\underline{t}_j}{\rho+2C/\underline{t}_j} \cdot |\mathcal{I}_j|\\
		& = 
		\frac{2\log_2(T)}{\eta} + \frac{\log_2(T)\ln\left(KT\right)}{\eta}+ \eta T\log_2(T) + \sum_{j\in[S]} \frac{2C}{\underline{t}_j\rho+2C} \cdot |\mathcal{I}_j|\\
		& = 
		\frac{2\log_2(T)}{\eta} + \frac{\log_2(T)\ln\left(KT\right)}{\eta}+ \eta T\log_2(T) + \sum_{i=1}^k \frac{2C}{2^{i-1}\rho+2C} \cdot 2^{i-1} \\
		& \leq \frac{2\log_2(T)}{\eta} + \frac{\log_2(T)\ln\left(KT\right)}{\eta}+ \eta T\log_2(T) + \frac{2C}{\rho}\log_2(T),
	\end{align*}
	where $|\mathcal{I}_j|$ is the number of rounds of phase $j$ and $\underline t_j\coloneqq \min_{t\in\mathcal{I}_j}t$ with $\mathcal{I}_j\coloneqq\{t\in[T]:\phi(t)=j\}$.
	
	Setting $\eta=\frac{\sqrt{\ln(KT)}}{\sqrt{T}}$ we obtain, with probability at least $1-\delta$:
	\begin{equation*}
		\sum_{t\in[T]} \left[\lvec_t^\top\xvec_t - \lvec_t^\top\xvec^* \right]\leq 4  \log_2(T)\sqrt{T\ln(KT)} + \frac{2C}{\rho}\log_2(T).
	\end{equation*}  
	Employing the Azuma-Höeffding inequality to bound $|\sum_{t\in[T]} \lvec_t^\top\xvec_t- \sum_{t\in[T]}\ell_t(a_t)|$ and $|\sum_{t\in[T]} \lvec_t^\top\xvec^*- \sum_{t\in[T]} \bar\lvec_t^\top\xvec^*|$ with an additional union bound concludes the proof.
\end{proof}

\subsection{Violation}

\viofull*

\begin{proof}
	To prove the result, we first employ Lemma~\ref{lem:solution} to state that the update of Algorithm~\ref{alg: full} admits a solution for all $t\in[T]$. Then, we bound the violation for a general constraint $i\in[m]$, which we call \[V_{T,i}\coloneqq\sum_{t\in[T]}\left[ \bar \gvec_{t,i}^\top \xvec_t \right]^+,\] for simplicity, which implies the same bound on $V_T\coloneqq\max_{i\in[m]} V_{T,i}$. 
	
	Thus, we proceed as follows:
	\begin{align*}
		V_{T,i}&=\sum_{t\in[T]}\left[ \bar \gvec_{t,i}^\top \xvec_t \right]^+\\
		& = \sum_{t\in[T]}\left[ \left(1-\frac{1}{T}\right)\bar \gvec_{t,i}^\top \widetilde{\xvec}_t + \frac{1}{T}\bar \gvec_{t,i}^\top \xvec_{\mathcal{U}}\right]^+\\
		& \leq \sum_{t\in[T]}\left[ \left(1-\frac{1}{T}\right)\bar \gvec_{t,i}^\top \widetilde{\xvec}_t\right]^+ + \sum_{t\in[T]}\left[  \frac{1}{T}\bar \gvec_{t,i}^\top \xvec_{\mathcal{U}}\right]^+ \\
		& =  \left(1-\frac{1}{T}\right)\sum_{t\in[T]}\left[ \bar \gvec_{t,i}^\top \widetilde{\xvec}_t\right]^+ + \frac{1}{T}\sum_{t\in[T]}\left[  \bar \gvec_{t,i}^\top \xvec_{\mathcal{U}}\right]^+ \\
		& \leq 1 + \sum_{t\in[T]}\left[ \bar \gvec_{t,i}^\top \widetilde{\xvec}_t\right]^+ ,
	\end{align*}
	where we simply used the definition of the fixed share update employed in Algorithm~\ref{alg: full}, the inequality $[a+b]^+\leq [a]^++[b]^+$ and the fact that the maximum violation attainable in a single round is bounded by $1$.
	
	We proceed focusing on the violations attained by $\widetilde{\xvec}_t$. Thus, it holds:
	\begin{subequations}
		\begin{align}
			\sum_{t\in[T]}\left[ \bar \gvec_{t,i}^\top \widetilde{\xvec}_t\right]^+ & =  \bar \gvec_{1,i}^\top \widetilde{\xvec}_1 + \sum_{t=2}^T\left[ \bar \gvec_{t,i}^\top \widetilde{\xvec}_t \pm  \gvec^{\circ\top}_{i} \widetilde{\xvec}_t\right]^+ \nonumber\\
			& \leq  1+  \sum_{t=2}^T\| \bar \gvec_{t,i} -  \gvec^{\circ}_{i} \|_1 +  \sum_{t=2}^T\left[   \gvec^{\circ\top}_{i} \widetilde{\xvec}_t\right]^+ \label{viofull:eq1}\\
			& \leq   1+ C +  \sum_{t=2}^T\left[   \gvec^{\circ\top}_{i} \widetilde{\xvec}_t \pm (\widehat{\gvec}_{t-1,i}-\xivec_{t-1})^\top \widetilde{\xvec}_t\right]^+ \label{viofull:eq2} \\
			& \leq   1+  C +  \sum_{t=2}^T\left[   \gvec^{\circ\top}_{i} \widetilde{\xvec}_t -  (\widehat{\gvec}_{t-1,i}-\xivec_{t-1})^\top \widetilde{\xvec}_t \right]^+ \nonumber\\ &\mkern30mu+  \sum_{t=2}^T\left[   (\widehat{\gvec}_{t-1,i}-\xivec_{t-1})^\top \widetilde{\xvec}_t \right]^+ \label{viofull:eq3} \\
			& =  1+ C +  \sum_{t=2}^T\left[   \gvec^{\circ\top}_{i} \widetilde{\xvec}_t -  (\widehat{\gvec}_{t-1,i}-\xivec_{t-1})^\top \widetilde{\xvec}_t \right]^+ \label{viofull:eq4}\\
			& \leq 1+  C +  \sum_{t=2}^T  \left[  \left( \gvec^{\circ}_{i}  -  \widehat{\gvec}_{t-1,i}\right)^\top \widetilde{\xvec}_t \right]^+ + \sum_{t=2}^T\left[ \xivec_{t-1}^\top \widetilde{\xvec}_t \right]^+ \label{viofull:eq5}\\
			& \leq  1+ C +  \sum_{t=2}^T  \left[  \sigmavec_{t-1}^\top \widetilde{\xvec}_t \right]^+ + 2 \sum_{t=2}^T\left[ \xivec_{t-1}^\top \widetilde{\xvec}_t \right]^+ \label{viofull:eq6}\\
			&  =  1+  C +  \sum_{t=2}^T   \sigmavec_{t-1}^\top \widetilde{\xvec}_t  + 2 \sum_{t=2}^T\xivec_{t-1}^\top \widetilde{\xvec}_t \label{viofull:eq7}\\
			& \leq  1+  C + C(1+\ln(T)) + 16\sqrt{T\ln\left(\frac{TKm}{\delta}\right)} \label{viofull:eq8} \\
			& = 1+ 2C + C\ln(T) + 16\sqrt{T\ln\left(\frac{TKm}{\delta}\right)} \nonumber,
		\end{align}
	\end{subequations}
	where Inequality~\eqref{viofull:eq1} holds using $[a+b]^+\leq [a]^++[b]^+$, the Hölder inequality and the fact that the violation attained in the first round is upper bounded by $1$, Inequality~\eqref{viofull:eq2} holds by definition of $C$, Inequality~\eqref{viofull:eq3} follows from $[a+b]^+\leq [a]^++[b]^+$, Equation~\eqref{viofull:eq4} holds since $   (\widehat{\gvec}_{t-1,i}-\xivec_{t-1})^\top \widetilde{\xvec}_t \leq 0$ for all $t\in[T]$ by definition of $\mathcal{X}_t$, Inequality~\eqref{viofull:eq5} follows from $[a+b]^+\leq [a]^++[b]^+$, 
	Inequality~\eqref{viofull:eq6} holds with probability at least $1-\delta$ employing Lemma~\ref{lem:conc2} and defining $\sigmavec_t\in \mathbb{R}^K$ such that $\sigma_t(a)=\nicefrac{C}{t}$ for all $a\in[K]$, Equation~\eqref{viofull:eq7} follows from the fact that the quantities inside the $[\cdot]^+$ operator are positive and Inequality~\eqref{viofull:eq8} holds since
	\[ \sum_{t=2}^T\frac{1}{t-1} \leq 1+\ln(T)
	, \quad \sum_{t=2}^T\frac{1}{\sqrt{t-1}} \leq 2\sqrt{T},
	\]
	and the fact that any strategy sums to $1$. 
	
	Combining the previous equations concludes the proof.
\end{proof}

\section{Omitted Proofs of Section~\ref{sec:fullconst}}
\label{app:fullconst}

\begin{algorithm}[!htp]
	\caption{Constrained OMD with Fixed Share and Implicit Exploration (\texttt{ConOMD-FS IX})}
	\label{alg: full_const}
    \begin{algorithmic}[1]
	
	\Require $T \in \mathbb{N}$, $K \in \mathbb{N}$, $\delta \in (0,1)$
	
	\State Initialize $\xvec_1\gets\xvec_\mathcal{U}$ where $\xvec_\mathcal{U}(a)\coloneqq1/K \ \forall a\in[K]$
	
	\State Initialize $\eta\gets\sqrt{\ln(KT)/KT}, \gamma\gets\eta/2$

	\For{$t\in[T]$}
		\State Play $a_t\sim\xvec_t$
		
		\State Observe loss $\ell_t(a_t)$ and constraints $\gvec_{t,i} \ \forall i\in[m]$
		
		\State Update empirical mean $\widehat \gvec_{t,i} \ \forall i\in[m]$
		
		\State Define $\xivec_t$ as $\xi_t(a)\coloneqq4\sqrt{\frac{1}{t}\ln\left(\frac{TKm}{\delta}\right)}$ for all $a\in[K]$
		
		\State Build $\mathcal{X}_t\coloneqq\{\xvec\in\Delta_K: (\gvec_{t,i}-\xivec_t)^\top \xvec\leq 0\}$
		
		\State Build $\widehat{\lvec}_t$ such that $\widehat\ell_t(a)\coloneqq \frac{\ell_t(a)}{x_t(a)+\gamma}\mathbb{I}_t(a)$ for all $a\in[K]$
		
		\State Compute $\widetilde\xvec_{t+1}\gets \argmin_{\xvec\in\mathcal{X}_t}
		\widehat\lvec_t^\top \xvec + \frac{1}{\eta} D(\xvec||\xvec_t)$
		
		\State Select $\xvec_{t+1}\coloneqq (1-\frac{1}{T})\widetilde\xvec_{t+1} + \frac{1}{T}\xvec_\mathcal{U}$
		
	\EndFor
    \end{algorithmic}
\end{algorithm}

\begin{lemma}
	\label{lem:switch_full_const}
	Let $\xvec^*_{\phi(1)}, \dots ,\xvec^*_{\phi(T)}$ be a $S$-switch dynamic benchmark as defined in Definition~\ref{def:switch}. Thus, OMD with implicit exploration and the following fixed share update:
	\[\widetilde\xvec_{t+1}\gets \argmin_{\xvec\in\mathcal{X}_t} \widehat \lvec_t^\top \xvec_t + \frac{1}{\eta} D(\xvec||\xvec_t) \ ; \
	\xvec_{t+1}\coloneqq \left(1-\frac{1}{T}\right)\widetilde\xvec_{t+1} + \frac{1}{T}\xvec_\mathcal{U},\]
	where $\widehat\lvec_t \text{ s.t. }  \widehat\ell_t(a)\coloneqq \frac{\ell_t(a)}{x_t(a)+\gamma}\mathbb{I}_t(a)$ for all $a\in[K]$  attains, with probability at least $1-3\delta$:
	\[R_T(\{\xvec^*_{\phi(t)}\}_{t=1}^T)
	\leq \frac{2S}{\eta}+ \frac{S\ln\left(KT\right)}{\eta}+ 2\eta SKT +  SK\ln\left(\frac{SK}{\delta}\right)+S\sqrt{2T\ln\left(\frac{S}{\delta}\right)}+\frac{S\ln\left(\frac{KS}{\delta}\right)}{\eta}.\]
\end{lemma}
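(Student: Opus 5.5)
I will follow the proof of Lemma~\ref{lem:switch_full} phase by phase, running the same Bregman-divergence argument on the estimated losses $\widehat\lvec_t$ rather than on $\lvec_t$. I then add two concentration steps, both standard for implicit exploration~\citep{neu}, to pass from $\widehat\lvec_t$ back to $\lvec_t$.

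\textbf{Step 1: phase-wise OMD bound on estimated losses.} Fix a phase $j\in[S]$ with comparator $\xvec^*_j\in\bigcap_{t\in\mathcal{I}_j}\mathcal{X}_t$. Define $\bar x_{t+1}(a)=x_t(a)e^{-\eta\widehat\ell_t(a)}$. Repeat verbatim the chain of Lemma~\ref{lem:switch_full}: the generalized Pythagorean inequality applies because $\xvec^*_j\in\mathcal{X}_t$, the fixed-share step costs at most $\ln(1/(1-1/T))\le 2/T$, and the telescoping within the phase is bounded by $2+\ln(KT)$ using $x_t(a)\ge 1/(KT)$. This gives
\[
\eta\sum_{t\in\mathcal{I}_j}\widehat\lvec_t^\top(\xvec_t-\xvec^*_j)\le 2+\ln(KT)+\eta^2\sum_{t\in\mathcal{I}_j}\sum_a x_t(a)\widehat\ell_t(a)^2 .
\]
The inequality $e^{-z}\le 1-z+z^2$ still holds since $\widehat\ell_t\ge0$. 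I bound the quadratic term by $x_t(a)\widehat\ell_t(a)^2\le\widehat\ell_t(a)$, since $x_t(a)/(x_t(a)+\gamma)\le1$, so it is at most $\eta^2\sum_{t\in\mathcal{I}_j}\sum_a\widehat\ell_t(a)$.

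\textbf{Step 2: concentration.} Three high-probability statements are needed, each union-bounded over the $S$ phases.
\begin{itemize}
\item[(a)] The IX lemma of \citet{neu}, applied in each phase with the fixed comparator $\xvec^*_j$ and union-bounded over the $K$ arms, gives $\sum_{t\in\mathcal{I}_j}(\widehat\ell_t(a)-\ell_t(a))\le \ln(KS/\delta)/(2\gamma)$ for all $a$. This yields both $-\sum\widehat\lvec_t^\top\xvec^*_j\le-\sum\lvec_t^\top\xvec^*_j+\ln(KS/\delta)/\eta$ and $\sum_a\sum_t\widehat\ell_t(a)\le K|\mathcal{I}_j|+K\ln(KS/\delta)/\eta$. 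The second bound makes the quadratic term at most $\eta KT+K\ln(SK/\delta)$ after dividing by $\eta$.
\item[(b)] For the learner's side, $\lvec_t^\top\xvec_t-\widehat\lvec_t^\top\xvec_t=\lvec_t^\top\xvec_t-\mathbb{E}_t[\widehat\lvec_t^\top\xvec_t]+\gamma\sum_a\ell_t(a)x_t(a)/(x_t(a)+\gamma)$. The bias term is at most $\gamma K$ per round, which contributes $\eta KT/2$ to the total and hence one more $\eta KT$-type term. The martingale part has increments $\widehat\lvec_t^\top\xvec_t\le1$, so Azuma gives $\sqrt{2T\ln(S/\delta)}$ per phase.
\end{itemize}

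\textbf{Step 3: assembly.} Summing over phases and collecting terms gives the claimed form: $2S/\eta+S\ln(KT)/\eta$ from the OMD terms, $2\eta SKT$ from the variance plus IX bias, $SK\ln(SK/\delta)$ from the variance concentration, $S\sqrt{2T\ln(S/\delta)}$ from Azuma, and $S\ln(KS/\delta)/\eta$ from the comparator concentration. Three union-bounded events give probability $1-3\delta$.

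The main obstacle is the variance term in Step 1. Unlike the full-feedback case it is not deterministically $\le\eta^2T$, and it must be controlled with high probability without blowing up the dependence on $K$. Following \citet{neu}, I plan to handle it by reusing the IX concentration from (a) applied to $\sum_t\widehat\ell_t(a)$, rather than bounding $\widehat\ell_t(a)\le1/\gamma$ crudely.
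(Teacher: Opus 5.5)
Your proposal is correct and follows the paper's proof essentially step for step: the same three-way split into the OMD term on $\widehat\lvec_t$ (bounded via $x_t(a)\widehat\ell_t(a)^2\le\widehat\ell_t(a)$ plus Neu's IX concentration for the resulting sum), the bias-plus-Azuma term on the learner's side, and IX concentration for the comparator, combined with $\eta=2\gamma$ and a union bound over the $S$ phases. Two small remarks. In (b) the martingale part should read $\mathbb{E}_t[\widehat\lvec_t^\top\xvec_t]-\widehat\lvec_t^\top\xvec_t$; as written, your identity counts the bias twice and drops the martingale term. Also, since you reuse a single all-arms IX event for both the variance and comparator terms, two events already suffice, so the $1-3\delta$ guarantee holds a fortiori.
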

\begin{proof}
	To prove the result, we fix a phase $j\in[S]$, so that the baseline $\xvec_j^*$ used in the regret associated to that phase is fixed.
	
	Thus, we recall the definition of the Bregman divergence $$D(\xvec_1||\xvec_2)\coloneqq \sum_{a\in[K]}x_1(a)\ln\left(\frac{x_1(a)}{x_2(a)}\right)-\sum_{a\in[K]}(x_1(a)-x_2(a)),$$
	which is equivalent to $D(\xvec_1||\xvec_2)\coloneqq \sum_{a\in[K]}x_1(a)\ln\left(\nicefrac{x_1(a)}{x_2(a)}\right)$ whenever $\xvec_1$ and $\xvec_2$ are in the simplex
	and we define $\bar \xvec_{t+1}$ the vector whose components are $\bar x_{t+1}(a)= x_{t}(a)e^{-\eta \widehat\ell_t(a)}$ so that $$\widetilde{\xvec}_{t+1}=\argmin_{\xvec\in\mathcal{X}_t} D(\xvec||\bar {\xvec}_{t+1}).$$
	
	We first decompose the regret in phase $j\in[S]$ as follows:
	\[
	\sum_{t\in\mathcal{I}_j}\lvec_t^\top(\xvec_t-\xvec^*_j)
	= \sum_{t\in\mathcal{I}_j}\widehat \lvec_t^\top(\xvec_t-\xvec^*_j) + \sum_{t\in\mathcal{I}_j}(\lvec_t-\widehat \lvec_t)^\top\xvec_t + \sum_{t\in\mathcal{I}_j}(\widehat\lvec_t- \lvec_t)^\top\xvec^*_j.
	\]
	We bound the three terms separately.
	
	\paragraph{Bound on the first term}
	We proceed similarly to Lemma~\ref{lem:switch_full} fixing $t\in[T]$ s.t. $\phi(t+1)=j$ and we get:
	\[\eta \widehat \lvec_t^{\top}(\xvec_t-\xvec^*_j)\leq  D(\xvec_j^*|| \xvec_t) - D(\xvec_j^*|| \xvec_{t+1}) + D(\xvec_{t}|| \bar \xvec_{t+1})+ \ln\left(\frac{1}{1-\frac{1}{T}}\right).\]
	Now, we define the quantities $\mathcal{I}_j\coloneqq\{t\in[T]:\phi(t)=j\}$, $\underline t_j\coloneqq \min_{t\in\mathcal{I}_j}t$ and we proceed bounding the first term in phase $j$ as follows:
	\begin{align*}
		\eta& \sum_{t\in\mathcal{I}_j}\widehat \lvec_t^\top(\xvec_t-\xvec^*_j) \\&\leq  \sum_{t\in\mathcal{I}_j} \left[D(\xvec_j^*|| \xvec_t) - D(\xvec_j^*|| \xvec_{t+1})\right] + \sum_{t\in\mathcal{I}_j}D(\xvec_{t}|| \bar \xvec_{t+1})+ \sum_{t\in\mathcal{I}_j}\ln\left(\frac{1}{1-\frac{1}{T}}\right) \\
		& = D(\xvec_j^*|| \xvec_{\underline t_j}) - D(\xvec_j^*|| \xvec_{\underline t_{j+1}}) + \sum_{t\in\mathcal{I}_j}D(\xvec_{t}|| \bar \xvec_{t+1})+ \sum_{t\in\mathcal{I}_j}\ln\left(\frac{1}{1-\frac{1}{T}}\right)\\
		& \leq 2+  \sum_{a\in[K]}x_j^*(a)\ln\left(\frac{x_{\underline{t}_{j+1}}(a)}{x_{\underline{t}_{j}(a)}}\right) + \sum_{t\in\mathcal{I}_j}D(\xvec_{t}|| \bar \xvec_{t+1}) \\
		& \leq 2+  \ln\left(KT\right) + \sum_{t\in\mathcal{I}_j}D(\xvec_{t}|| \bar \xvec_{t+1}) \\
		& = 2+  \ln\left(KT\right) + \sum_{t\in\mathcal{I}_j}\left[\sum_{a\in[K]}x_t(a)\ln\left(\frac{x_t(a)}{x_t(a)e^{-\eta\widehat \ell_t(a)}} \right) -1 + \sum_{a\in[K]}x_t(a)e^{-\eta\widehat \ell_t(a)}\right] \\
		& = 2+ \ln\left(KT\right)+ \sum_{t\in\mathcal{I}_j}\left[\eta \sum_{a\in[K]}x_t(a)\widehat \ell_t(a)  -1 + \sum_{a\in[K]}x_t(a)e^{-\eta\widehat \ell_t(a)}\right]  \\
		& \leq 2+ \ln\left(KT\right)+ \sum_{t\in\mathcal{I}_j}\left[\eta \sum_{a\in[K]}x_t(a)\widehat \ell_t(a) -1 + 1 -\eta\sum_{a\in[K]}x_t(a)\ell_t(a)+ \sum_{a\in[K]}x_t(a)\eta^2\widehat \ell_t(a)^2\right]\\
		& = 2+ \ln\left(KT\right)+ \eta^2 \sum_{a\in[K]}\widehat \ell_t(a)^2 x_t(a) ,
	\end{align*}
	where we used $\ln\left(\frac{1}{1-\nicefrac{1}{T}}\right)\leq \frac{\nicefrac{1}{T}}{1-\nicefrac{1}{T}}=\nicefrac{1}{T-1}\leq \nicefrac{2}{T},$ the definition of the fixed share update and the inequality $e^{-z}\leq 1 - z +z^2$ for $z\geq0$.
	
	Thus, rearranging, we obtain the following bound for the second term:
	\begin{align*}
		\sum_{t\in\mathcal{I}_j}\widehat \lvec_t^\top(\xvec_t-\xvec^*_j) &
		\leq \frac{2}{\eta}+ \frac{\ln\left(KT\right)}{\eta}+ \eta\sum_{t\in\mathcal{I}_j}\sum_{a\in[K]}\widehat \ell_t(a)^2 x_t(a)\\
		& =\frac{2}{\eta}+ \frac{\ln\left(KT\right)}{\eta}+ \eta\sum_{t\in\mathcal{I}_j}\sum_{a\in[K]} \frac{\ell_t(a)\mathbb{I}_t(a)}{x_t(a)+\gamma} \cdot\frac{\ell_t(a)\mathbb{I}_t(a)}{x_t(a)+\gamma} x_t(a) \\
		& \leq \frac{2}{\eta}+ \frac{\ln\left(KT\right)}{\eta}+ \eta\sum_{t\in\mathcal{I}_j}\sum_{a\in[K]} \frac{\ell_t(a)\mathbb{I}_t(a)}{x_t(a)+\gamma} \cdot\frac{x_t(a)}{x_t(a)+\gamma}\\ 
		& \leq \frac{2}{\eta}+ \frac{\ln\left(KT\right)}{\eta}+ \eta\sum_{t\in\mathcal{I}_j}\sum_{a\in[K]}\widehat \ell_t(a) \\
		& \leq \frac{2}{\eta}+ \frac{\ln\left(KT\right)}{\eta}+ \eta\sum_{t\in\mathcal{I}_j}\sum_{a\in[K]}\ell_t(a) + \frac{\eta K\ln\left(\frac{K}{\delta}\right)}{2\gamma} \\
		&  \leq \frac{2}{\eta}+ \frac{\ln\left(KT\right)}{\eta}+ \eta T K + \frac{\eta K\ln\left(\frac{K}{\delta}\right)}{2\gamma},
	\end{align*}
	where we employed Corollary 1 of~\citep{neu}, which holds with probability at least $1-\delta$.

	\paragraph{Bound on the second term} We bound the term of interest as follows:
	\begin{align*}
		\sum_{t\in\mathcal{I}_j}\left(\lvec_t-\widehat \lvec_t\right)^\top\xvec_t & = \sum_{t\in\mathcal{I}_j}\left(\mathbb{E}[\widehat\lvec_t]-\widehat \lvec_t\right)^\top\xvec_t + \sum_{t\in\mathcal{I}_j}\left(\lvec_t-\mathbb{E}[\widehat\lvec_t]\right)^\top\xvec_t \\
		& \leq \sqrt{2|\mathcal{I}_j|\ln\left(\frac{1}{\delta}\right)}
		+ \sum_{t\in\mathcal{I}_j}\sum_{a\in[K]}\left(\ell_t(a)-\mathbb{E}[\widehat\ell_t(a)]\right)x_t(a) \\
		& =  \sqrt{2|\mathcal{I}_j|\ln\left(\frac{1}{\delta}\right)}
		+ \sum_{t\in\mathcal{I}_j}\sum_{a\in[K]}\left(\ell_t(a)-\frac{\ell_t(a)x_t(a)}{x_t(a)+\gamma}\right)x_t(a)\\
		& \leq \sqrt{2|\mathcal{I}_j|\ln\left(\frac{1}{\delta}\right)}
		+ \gamma K |\mathcal{I}_j| \\
		& \leq \sqrt{2T\ln\left(\frac{1}{\delta}\right)}
		+ \gamma K T,
	\end{align*}
	where the first inequality holds with probability at least $1-\delta$ thanks to the Azuma-Höeffding inequality noticing that $\left |\left(\mathbb{E} [\widehat\lvec_t]-\widehat \lvec_t\right)^\top\xvec_t\right |\leq 1$ for all $t\in[T]$.
	
	\paragraph{Bound on the third term} To bound the last term we apply again Corollary 1 of~\citep{neu} to get:
	\begin{align*}
		\sum_{t\in\mathcal{I}_j}\left(\widehat\lvec_t- \lvec_t\right)^\top\xvec^*_j \leq \frac{\ln\left(\frac{K}{\delta}\right)}{2\gamma},
	\end{align*}
	which holds with probability at least $1-\delta$.
	
	\paragraph{Final bound}
	Combining the previous results, we get with probability at least $1-3\delta$ by union bound:
	\begin{equation*}
		\sum_{t\in\mathcal{I}_j}\lvec_t^\top(\xvec_t-\xvec^*_j)
		=  \frac{2}{\eta}+ \frac{\ln\left(KT\right)}{\eta}+ \eta T K + \frac{\eta K\ln\left(\frac{K}{\delta}\right)}{2\gamma}+\sqrt{2T\ln\left(\frac{1}{\delta}\right)}
		+ \gamma K T+\frac{\ln\left(\frac{K}{\delta}\right)}{2\gamma},
	\end{equation*}
	which tacking $\eta = 2\gamma$ implies:
	\begin{equation*}
		\sum_{t\in\mathcal{I}_j}\lvec_t^\top(\xvec_t-\xvec^*_j)
		=  \frac{2}{\eta}+ \frac{\ln\left(KT\right)}{\eta}+ 2\eta T K +  K\ln\left(\frac{K}{\delta}\right)+\sqrt{2T\ln\left(\frac{1}{\delta}\right)}+\frac{\ln\left(\frac{K}{\delta}\right)}{\eta},
	\end{equation*}
	Noticing that the regret can be rewritten as:
	\[R_T(\{\xvec^*_{\phi(t)}\}_{t=1}^T)\coloneqq \sum_{t=1}^T \left[\lvec_t^\top \xvec_t - \lvec_t^\top \xvec^*_{\phi(t)}\right]= \sum_{j\in[S]}\sum_{t\in\mathcal{I}_j}\lvec_t^\top(\xvec_t-\xvec^*_j),\]
	and taking a final union bound over the $S$ phases concludes the proof.    
\end{proof}

\regfullconst*
\begin{proof}
	To prove the result, we follow the analysis of Theorem~\ref{thm: reg_full}.
	
	Differently to Theorem~\ref{thm: reg_full}, we employ the regret bound attained in Lemma~\ref{lem:switch_full_const} on the benchmark $\xvec^*_{\alpha_t}$, which is updated $\log_2(T)$ times, namely, whenever the number of rounds doubles. Thus, we have the following result, which holds with probability at least $1-3\delta$:
	\begin{align*}
		R_T(\{\xvec^*_{\alpha_{\phi(t)}}\}_{t=1}^T)&\coloneqq \sum_{t=1}^T \left[\lvec_t^\top \xvec_t - \lvec_t^\top \xvec^*_{\alpha_{\phi(t)}}\right]\\ &\leq \frac{2\log_2(T)}{\eta}+ \frac{\log_2(T)\ln\left(KT\right)}{\eta}+ 2\eta KT\log_2(T) \\ & \quad +  K\log_2(T)\ln\left(\frac{\log_2(T)K}{\delta}\right)+\log_2(T)\sqrt{2T\ln\left(\frac{\log_2(T)}{\delta}\right)}\\ &\quad +\frac{\log_2(T)\ln\left(\frac{K\log_2(T)}{\delta}\right)}{\eta}.
	\end{align*}
	Following the analysis of Theorem~\ref{thm: reg_full}, we get:
	\begin{align*}
		\sum_{t\in[T]} \left[\lvec_t^\top\xvec_t - \lvec_t^\top\xvec^* \right] &= \sum_{t\in[T]} \left[\lvec_t^\top\xvec_t \pm \lvec_t^\top\xvec^*_{\alpha_{\phi(t)}}- \lvec_t^\top\xvec^* \right]\\
		& \leq \frac{2\log_2(T)}{\eta}+ \frac{\log_2(T)\ln\left(KT\right)}{\eta}+ 2\eta KT\log_2(T) \\ & \quad +  K\log_2(T)\ln\left(\frac{\log_2(T)K}{\delta}\right)+\log_2(T)\sqrt{2T\ln\left(\frac{\log_2(T)}{\delta}\right)}\\ &\quad +\frac{\log_2(T)\ln\left(\frac{K\log_2(T)}{\delta}\right)}{\eta} + \frac{2C}{\rho}\log_2(T),
	\end{align*}
	which holds with probability at least $1-4\delta$, by union bound.
	
	Setting $\eta=\frac{\sqrt{\ln(KT)}}{\sqrt{KT}}$ we obtain, with probability at least $1-4\delta$:
	\begin{equation*}
		\sum_{t\in[T]} \left[\lvec_t^\top\xvec_t - \lvec_t^\top\xvec^* \right] \leq    K\log_2(T)\ln\left(\frac{\log_2(T)K}{\delta}\right) +7\log_2(T)\ln\left(\frac{K\log_2(T)}{\delta}\right)\sqrt{KT\ln(KT)} + \frac{2C}{\rho}\log_2(T).
	\end{equation*}  
	Employing the Azuma-Höeffding inequality to bound $|\sum_{t\in[T]} \lvec_t^\top\xvec_t- \sum_{t\in[T]}\ell_t(a_t)|$ and $|\sum_{t\in[T]} \lvec_t^\top\xvec^*- \sum_{t\in[T]} \bar\lvec_t^\top\xvec^*|$ with an additional union bound concludes the proof.
\end{proof}

\section{Omitted Proofs of Section~\ref{sec:bandit}}

\subsection{Preliminary Results}

\begin{lemma}
	\label{lem:solution_bandit}
	Let $\delta\in(0,1)$. With probability at least $1-\delta$, $\mathcal{X}_t$ is not empty at each round $t\in[T]$.
\end{lemma}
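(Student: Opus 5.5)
The plan is to mirror the proof of Lemma~\ref{lem:solution}, replacing the full-feedback averages with the bandit estimator based on $N_t(a)$. I exhibit a single strategy lying in every $\mathcal{X}_t$: the pure strategy $\xvec^\diamond$ that plays the strictly feasible arm $a^\diamond$ given by Assumption~\ref{cond:slater_pure}. Since $\bar g_{\tau,i}(a^\diamond)\le -\rho<0$ for all $\tau\in[T]$ and $i\in[m]$, it suffices to show that, with high probability, $\widehat g_{t,i}(a^\diamond)-\xi_t(a^\diamond)\le 0$ for every $t$ in the optimization phase and every $i$. Then $(\widehat\gvec_{t,i}-\xivec_t)^\top\xvec^\diamond=\widehat g_{t,i}(a^\diamond)-\xi_t(a^\diamond)\le 0$. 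Note that after the exploration phase $N_t(a)\ge\lceil T^\beta\rceil\ge 1$, so all quantities are well defined.

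First, I apply the Azuma inequality to the martingale difference sequence $\mathbb{I}_\tau(a)\big(g_{\tau,i}(a)-\bar g_{\tau,i}(a)\big)$, exactly as in the first part of the proof of Lemma~\ref{lem:conc2}. Taking a union bound over $t\in[T]$, $a\in[K]$, $i\in[m]$ (and over the possible values of $N_t(a)$), this gives, with probability at least $1-\delta$,
\[
\Big|\frac{1}{N_t(a)}\sum_{\tau\in[t]}\mathbb{I}_\tau(a)\big(g_{\tau,i}(a)-\bar g_{\tau,i}(a)\big)\Big|\le 4\sqrt{\frac{1}{N_t(a)}\ln\Big(\frac{TKm}{\delta}\Big)}=\xi_t(a).
\]
Consequently $\widehat g_{t,i}(a)-\xi_t(a)\le \frac{1}{N_t(a)}\sum_{\tau\in[t]}\mathbb{I}_\tau(a)\bar g_{\tau,i}(a)$.

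Evaluated at $a=a^\diamond$, the right-hand side is an average of values $\bar g_{\tau,i}(a^\diamond)\le-\rho$, so it is at most $-\rho<0$. Crucially, no corruption term $C/N_t(a)$ enters. The comparison is made against the per-round means on the rounds actually played, not against $\gvec^\circ_i$ or the hindsight average, and Assumption~\ref{cond:slater_pure} holds uniformly in time. Hence $\xvec^\diamond\in\mathcal{X}_t$ for all $t$, and $\mathcal{X}_t\neq\emptyset$.

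The main obstacle is the concentration step under bandit feedback. There $N_t(a)$ is a random stopping count, so a naive fixed-sample Hoeffding bound does not apply directly. I would handle it by applying Azuma to the sum indexed by $\tau$ with increments bounded by $2\mathbb{I}_\tau(a)$, or alternatively via a union bound over the $n$-th pull of each arm (an optional-skipping argument). The generous constant $4$ and the $\ln(TKm/\delta)$ factor absorb the extra union bound over $N_t(a)\in[T]$.
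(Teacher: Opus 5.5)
Your proposal is correct and follows the paper's proof essentially verbatim: Azuma plus a union bound gives $\widehat g_{t,i}(a)-\xi_t(a)\le \frac{1}{N_t(a)}\sum_{\tau\in[t]}\mathbb{I}_\tau(a)\bar g_{\tau,i}(a)$, and evaluating this at the strictly feasible arm of Assumption~\ref{cond:slater_pure} shows that the corresponding pure strategy lies in every $\mathcal{X}_t$ (the paper uses any arm $\bar a$ with $\bar g_{t,i}(\bar a)\le 0$ for all $t,i$, and your $a^\diamond$ is one such arm). Your explicit handling of the random count $N_t(a)$, via an optional-skipping union bound over the $n$-th pull, addresses a point the paper glosses over, and it is the right way to make the concentration step rigorous.
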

\begin{proof}
	To prove the result, we show that any action $\bar a$ so that $\bar g_{t,i} (\bar a)\leq 0$ for all $i\in[m],t\in[T]$--notice that $\bar a$ exists thanks to Assumption~\ref{cond:slater_pure}--is included with high probability in $\mathcal{X}_t$ at each round $t\in[T]$. 
	
	Let $\delta\in(0,1)$. Similarly to what was done in Lemma~\ref{lem:conc2}, we employ the Azuma inequality to get, with probability at least $1-\delta$:
	\begin{equation*}
		\left | \sum_{\tau\in[t]}g_{\tau,i}(a)\mathbb{I}_\tau(a) -  \sum_{\tau\in[t]}\bar g_{\tau,i}(a)\mathbb{I}_\tau(a) \right | \leq 4\sqrt{N_t(a)\ln\left(\frac{TKm}{\delta}\right)},
	\end{equation*}
	which holds for all $t\in[T],a\in[K],i\in[m]$, by union bound.
	Thus, we get:
	\begin{equation*}
		\left |\frac{1}{N_t(a)} \sum_{\tau\in[t]}g_{\tau,i}(a)\mathbb{I}_\tau(a) - \frac{1}{N_t(a)} \sum_{\tau\in[t]}\bar g_{\tau,i}(a)\mathbb{I}_\tau(a) \right | = 4\sqrt{\frac{1}{N_t(a)}\ln\left(\frac{TKm}{\delta}\right)},
	\end{equation*}
	for all $a\in[K], i\in[m], t\in[T]$.
	
	From that, we get, with probability at least $1-\delta$:
	\begin{align*}
		\widehat g_{t,i}(\bar a)-\xi_t(\bar a) &= \frac{1}{N_t(\bar a)} \sum_{\tau\in[t]}g_{\tau,i}(\bar a)\mathbb{I}_\tau(\bar a)-\xi_t(\bar a)  \\
		& \leq\frac{1}{N_t(\bar a)} \sum_{\tau\in[t]}\bar g_{\tau,i}(\bar a)\mathbb{I}_\tau(\bar a) \\
		& \leq 0,
	\end{align*}
	where the last step holds by definition of $\bar a$. This concludes the proof.
\end{proof}

\subsection{Regret}

\regbandit*

\begin{proof}
	To prove the result, we first employ Lemma~\ref{lem:solution_bandit} to state that the update of Algorithm~\ref{alg: bandit} admits a solution for all $t\in[T]$.
	
	Then, we split the regret as follows:
	\begin{align*}
		R_T&\coloneqq\sum_{t=1}^T  \ell_t(a_t) - \sum_{t=1}^T  \bar \lvec_t^{\top}   \xvec^*\\
		& = \sum_{t\in[T_0]}  \ell_t(a_t) - \sum_{t\in[T_0]}  \bar \lvec_t^{\top}   \xvec^* + \sum_{t=T_0+1}^T  \ell_t(a_t) - \sum_{t=T_0+1}^T  \bar \lvec_t^{\top}   \xvec^*\\
		& \leq K(T^\beta +1) + \sum_{t=T_0+1}^T  \ell_t(a_t) - \sum_{t=T_0+1}^T  \bar \lvec_t^{\top}   \xvec^*.
	\end{align*}
	Thus, we study the decision space $\mathcal{X}_t$. Specifically, thanks to Lemma~\ref{lem:concfinal}, the following bound holds for all $\xvec \in \Delta_K$, $t\in[T]$ and $i\in[m]$ with probability at least $1-\delta$:
	\begin{equation*}\left(\widehat\gvec_{t,i}-\xivec_t\right)^\top \xvec \leq  \frac{1}{T}\sum_{t=1}^T\bar \gvec_{t,i}^\top \xvec + C\sum_{a\in[K]}  \frac{x(a)}{N_t(a)} +  \frac{C}{T} ,
	\end{equation*}
	which in turn implies:
	\begin{equation*}
		\left(\widehat\gvec_{t,i}-\xivec_t\right)^\top \xvec^* \leq  C\sum_{a\in[K]}  \frac{x^*(a)}{N_t(a)} +  \frac{C}{T}\quad \forall t\in[T],i\in[m].
	\end{equation*}
	On the other hand, by Assumption~\ref{cond:slater_pure} and proceeding as in Lemma~\ref{lem:solution_bandit}, it holds:
	\begin{align*}
		\widehat g_{t,i}(a^\diamond)-\xi_t(a^\diamond) &= \frac{1}{N_t(a^\diamond)} \sum_{\tau\in[t]}g_{\tau,i}(a^\diamond)\mathbb{I}_\tau(a^\diamond)-\xi_t(a^\diamond)  \\
		& \leq\frac{1}{N_t(a^\diamond)} \sum_{\tau\in[t]}\bar g_{\tau,i}(a^\diamond)\mathbb{I}_\tau(a^\diamond) \\
		& \leq -\rho,
	\end{align*}
	with probability at least $1-\delta$.
	
	Thus, we build the following convex combination between $\xvec^*$ and $\xvec^\diamond$---where, in this case, we define $\xvec^\diamond$ as the strategy the play $a^\diamond$ deterministically---parametrized given $\alpha_t$, that is:
	\begin{equation*}
		\xvec^*_{\alpha_t} = (1-\alpha_t)\xvec^\diamond + \alpha_t \xvec^*.
	\end{equation*}
	Studying the per round optimistic violation attained by $\xvec^*_{\alpha_t}$, we get:
	\begin{align*}
		\left(\widehat\gvec_{t,i}-\xivec_t\right)^\top \xvec^*_{\alpha_t} &=  (1-\alpha_t)  \left(\widehat\gvec_{t,i}-\xivec_t\right)^\top \xvec^\diamond+\alpha_t\left(\widehat\gvec_{t,i}-\xivec_t\right)^\top \xvec^*\\
		& \leq -(1-\alpha_t)\rho + \alpha_t\left(C\sum_{a\in[K]}  \frac{x^*(a)}{N_t(a)} +  \frac{C}{T}\right) \\
		& \leq -(1-\alpha_t)\rho + \alpha_t \cdot 2C\sum_{a\in[K]}  \frac{x^*(a)}{N_t(a)} .
	\end{align*}
	Now, we focus on the case $t>T_0$, that is, Algorithm~\ref{alg: bandit} has concluded the forced exploration phase. In such a case, the algorithm guarantees $N_t(a)\geq T^\beta$ for all $a\in[K]$. From that, we have:
	\[\left(\widehat\gvec_{t,i}-\xivec_t\right)^\top \xvec^*_{\alpha_t}  \leq -(1-\alpha_t)\rho + \alpha_t \frac{2C}{T^\beta},
	\]
	for all $t>T_0$.
	Setting $\alpha_t=\alpha_{T_0}\coloneqq \frac{\rho}{\rho + \nicefrac{2C}{T^\beta}}$, we get  $(\widehat\gvec_{t,i}-\xivec_t)^\top \xvec^*_{\alpha_t}\leq 0$, that is, $\xvec^*_{\alpha_{T_0}}\in\mathcal{X}_t$ for all $t>T_0$.
	We are now ready to employ the guarantees of online mirror descent with negative entropy regularizer. Indeed, by standard analysis (e.g., following the analysis Lemma~\ref{lem:switch_full_const} without the fixed share update) and setting $\gamma=\eta/2$, we have the following result for all $\xvec\in \cap_{
		t=T_{0}+1}^T\mathcal{X}_t$:
	\begin{equation*}
		\sum_{t=T_0+1}^T \left[\lvec_t^\top \xvec_t - \lvec_t^\top \xvec\right]\leq \frac{\ln\left(KT\right)}{\eta}+ 2\eta T K +  K\ln\left(\frac{K}{\delta}\right)+\sqrt{2T\ln\left(\frac{1}{\delta}\right)}+\frac{\ln\left(\frac{K}{\delta}\right)}{\eta}, 
	\end{equation*}
	which holds with probability at least $1-3\delta$.
	
	Now, we are ready to bound the expected regret bound as follows:
	\begin{align*}
		\sum_{t=T_0+1}^T& \left[\lvec_t^\top\xvec_t - \lvec_t^\top\xvec^* \right] \\&= \sum_{t=T_0+1}^T \left[\lvec_t^\top\xvec_t \pm \lvec_t^\top\xvec^*_{\alpha_{T_0}}- \lvec_t^\top\xvec^* \right]\\
		& \leq \frac{\ln\left(KT\right)}{\eta}+ 2\eta T K +  K\ln\left(\frac{K}{\delta}\right)+\sqrt{2T\ln\left(\frac{1}{\delta}\right)}+\frac{\ln\left(\frac{K}{\delta}\right)}{\eta} + \sum_{t=T_0+1}^T \left[ \lvec_t^\top\xvec^*_{\alpha_{T_0}}- \lvec_t^\top\xvec^* \right]\\
		& \leq \frac{\ln\left(KT\right)}{\eta}+ 2\eta T K +  K\ln\left(\frac{K}{\delta}\right)+\sqrt{2T\ln\left(\frac{1}{\delta}\right)}+\frac{\ln\left(\frac{K}{\delta}\right)}{\eta} + \sum_{t=T_0+1}^T \left(1-\alpha_{T_0}\right) \lvec_t^\top \xvec^\diamond  \\
		& \leq \frac{\ln\left(KT\right)}{\eta}+ 2\eta T K +  K\ln\left(\frac{K}{\delta}\right)+\sqrt{2T\ln\left(\frac{1}{\delta}\right)}+\frac{\ln\left(\frac{K}{\delta}\right)}{\eta} + \sum_{t=T_0+1}^T \left(1-\alpha_{T_0}\right)\\
		& \leq \frac{\ln\left(KT\right)}{\eta}+ 2\eta T K +  K\ln\left(\frac{K}{\delta}\right)+\sqrt{2T\ln\left(\frac{1}{\delta}\right)}+\frac{\ln\left(\frac{K}{\delta}\right)}{\eta} +  \left(1-\alpha_{T_0} \right) \cdot T\\
		& = \frac{\ln\left(KT\right)}{\eta}+ 2\eta T K +  K\ln\left(\frac{K}{\delta}\right)+\sqrt{2T\ln\left(\frac{1}{\delta}\right)}+\frac{\ln\left(\frac{K}{\delta}\right)}{\eta} +  \frac{2CT}{\rho T^\beta + 2C}\\
		& \leq \frac{\ln\left(KT\right)}{\eta}+ 2\eta T K +  K\ln\left(\frac{K}{\delta}\right)+\sqrt{2T\ln\left(\frac{1}{\delta}\right)}+\frac{\ln\left(\frac{K}{\delta}\right)}{\eta} +  \frac{2C}{\rho} T^{1-\beta},
	\end{align*}
	which holds with probability at least $1-4\delta$ by union bound.
	
	Setting $\eta=\frac{\sqrt{\ln(KT)}}{\sqrt{KT}}$ we obtain, with probability at least $1-4\delta$:
	\begin{equation*}
		\sum_{t=T_0+1}^T \left[\lvec_t^\top\xvec_t - \lvec_t^\top\xvec^* \right]\leq 3 \sqrt{KT\ln(KT)}+ K\ln\left(\frac{K}{\delta}\right)+\sqrt{2T\ln\left(\frac{1}{\delta}\right)}+\sqrt{KT}\ln\left(\frac{K}{\delta}\right) + \frac{2C}{\rho}T^{1-\beta}.
	\end{equation*}  
	Employing the Azuma-Höeffding inequality to bound $|\sum_{t=T_0+1}^T \lvec_t^\top\xvec_t- \sum_{t=T_0+1}^T\ell_t(a_t)|$ and $\sum_{t=T_0+1}^T\lvec_t^\top\xvec^*- \sum_{t=T_0+1}^T \bar\lvec_t^\top\xvec^*|$ with an additional union bound and adding the regret incurred in the exploration phase concludes the proof.
\end{proof}

\subsection{Violation}
\viobandit*

\begin{proof}
	To prove the result, we first employ Lemma~\ref{lem:solution_bandit} to state that the update of Algorithm~\ref{alg: bandit} admits a solution for all $t\in[T]$. Then, we bound the violation for a general constraint $i\in[m]$, which we call \[V_{T,i}\coloneqq\sum_{t\in[T]}\left[ \bar \gvec_{t,i}^\top \xvec_t \right]^+,\] for simplicity, which implies the same bound on $V_T\coloneqq\max_{i\in[m]} V_{T,i}$. 
	
	Thus, we proceed splitting the violation as follows:
	\begin{align*}
		V_{T,i}&=\sum_{t\in[T]}\left[ \bar \gvec_{t,i}^\top \xvec_t \right]^+\\
		& \leq K (T^\beta+1) + \sum_{t=T_0+1}^T\left[ \bar \gvec_{t,i}^\top \xvec_t \right]^+, 
	\end{align*}
	where we simply used the fact that  the maximum violation attainable in a single round is bounded by $1$.
	
	We proceed bounding the violation attained in the second phase of the algorithm. Thus, it holds:
	\begin{subequations}
		\begin{align}
			\sum_{t=T_0+1}^T\left[ \bar \gvec_{t,i}^\top \xvec_t\right]^+ & =  \bar \gvec_{T_0+1,i}^\top \xvec_{T_0+1} + \sum_{t=T_0+2}^T\left[ \bar \gvec_{t,i}^\top \xvec_t \pm  \gvec^{\circ\top}_{i} \xvec_t\right]^+ \nonumber\\
			& \leq  1+  \sum_{t=T_0+2}^T\| \bar \gvec_{t,i} -  \gvec^{\circ}_{i} \|_1 +  \sum_{t=T_0+2}^T\left[   \gvec^{\circ\top}_{i} \xvec_t\right]^+ \label{viobandit:eq1}\\
			& \leq   1+ C +  \sum_{t=T_0+2}^T\left[   \gvec^{\circ\top}_{i} \xvec_t \pm (\widehat{\gvec}_{t-1,i}-\xivec_{t-1})^\top \xvec_t\right]^+ \label{viobandit:eq2} \\
			& \leq   1+  C +  \sum_{t=T_0+2}^T\left[   \gvec^{\circ\top}_{i} \xvec_t -  (\widehat{\gvec}_{t-1,i}-\xivec_{t-1})^\top \xvec_t \right]^+  \nonumber\\ &\mkern30mu+  \sum_{t=T_0+2}^T\left[   (\widehat{\gvec}_{t-1,i}-\xivec_{t-1})^\top \xvec_t \right]^+ \label{viobandit:eq3} \\
			& =  1+ C +  \sum_{t=T_0+2}^T\left[   \gvec^{\circ\top}_{i} \xvec_t -  (\widehat{\gvec}_{t-1,i}-\xivec_{t-1})^\top \xvec_t \right]^+ \label{viobandit:eq4}\\
			& \leq 1+  C +  \sum_{t=T_0+2}^T  \left[  \left( \gvec^{\circ}_{i}  -  \widehat{\gvec}_{t-1,i}\right)^\top \xvec_t \right]^+ + \sum_{t=T_0+2}^T\left[ \xivec_{t-1}^\top \xvec_t \right]^+ \label{viobandit:eq5}\\
			& \leq  1+ C +  \sum_{t=T_0+2}^T  \left[  \sigmavec_{t-1}^\top \xvec_t \right]^+ + 2 \sum_{t=T_0+2}^T\left[ \xivec_{t-1}^\top \xvec_t \right]^+ \label{viobandit:eq6}\\
			&  =  1+  C +  \sum_{t=T_0+2}^T   \sigmavec_{t-1}^\top \xvec_t  + 2 \sum_{t=T_0+2}^T\xivec_{t-1}^\top \xvec_t \label{viobandit:eq7}\\
			&  \leq   1+  C +  \sum_{t=T_0+2}^T \sum_{a\in[K]}  \sigma_{t-1}(a) \mathbb{I}_t(a)  \nonumber\\ &\mkern30mu+ 2 \sum_{t=T_0+2}^T \sum_{a\in[K]} \xi_{t-1}(a)\mathbb{I}_t(a) + 6\sqrt{T\ln\left(\frac{TK}{\delta}\right)} \label{viobandit:eq8}\\
			& \leq  1+  C + CK(1+\ln(T)) + 24\sqrt{KT\ln\left(\frac{TKm}{\delta}\right)} + 6\sqrt{T\ln\left(\frac{TK}{\delta}\right)}\label{viobandit:eq9} \\
			& \leq 1+ C + KC + KC\ln(T) + 30\sqrt{KT\ln\left(\frac{TKm}{\delta}\right)}  \nonumber,
		\end{align}
	\end{subequations}
	where Inequality~\eqref{viobandit:eq1} holds using $[a+b]^+\leq [a]^++[b]^+$, the Hölder inequality and the fact that the violation for each round is upper bounded by $1$, Inequality~\eqref{viobandit:eq2} holds by definition of $C$, Inequality~\eqref{viobandit:eq3} follows from $[a+b]^+\leq [a]^++[b]^+$, Equation~\eqref{viobandit:eq4} holds since $   (\widehat{\gvec}_{t-1,i}-\xivec_{t-1})^\top \xvec_t \leq 0$ for all $t\in[T]$ by definition of $\mathcal{X}_t$, Inequality~\eqref{viobandit:eq5} follows from $[a+b]^+\leq [a]^++[b]^+$, 
	Inequality~\eqref{viobandit:eq6} holds with probability at least $1-\delta$ employing Lemma~\ref{lem:conc2} and defining $\sigmavec_t\in \mathbb{R}^K$ such that $\sigma_t(a)=\nicefrac{C}{N_t(a)}$ for all $a\in[K]$, 
	Equation~\eqref{viobandit:eq7} follows from the fact that the quantities inside the $[\cdot]^+$ operator are positive,
	Inequality~\eqref{viobandit:eq8} holds with probability at least $1-2\delta$ employing the Azuma-Höeffding inequality after noticing that the confidence intervals can be capped to $1$ still making Lemma~\ref{lem:conc2} valid and a union bound and Inequality~\eqref{viobandit:eq9} holds since:
	\[ \sum_{t=1}^T\sum_{a\in[K]}\frac{\mathbb{I}_t(a)}{N_{t-1}(a)} \leq K(1+\ln(T))
	, \quad \sum_{t=1}^T\sum_{a\in[K]}\frac{\mathbb{I}_t(a)}{\sqrt{N_{t-1}(a)}} \leq 3\sqrt{KT},
	\]
	given that $\sum_{a\in[K]}\sqrt{N_T(a)}\leq \sqrt{K\sum_{a\in[K]}N_T(a)}\leq \sqrt{KT}$. 
	
	Combining the previous equations with a final union bound concludes the proof.
\end{proof}

\end{document}